\documentclass[dvipdfmx]{article}

\usepackage[utf8]{inputenc} 
\usepackage[T1]{fontenc}    

\usepackage{algorithm}
\usepackage{algorithmicx}
\usepackage{algpseudocode}
\usepackage{amsfonts}
\usepackage{amsmath}
\usepackage{amssymb}
\usepackage{amsthm}
\usepackage{arydshln}
\usepackage{booktabs}
\usepackage{bm}
\usepackage{color}
\usepackage{fancyhdr}
\usepackage[margin = 1.3in]{geometry}
\usepackage{graphicx}
\usepackage{hyperref}
\usepackage{jvlisting}
\usepackage{latexsym}
\usepackage{listings}
\usepackage{mathrsfs}
\usepackage{mathtools}
\usepackage{microtype}
\usepackage{multicol}
\usepackage{multirow}
\usepackage{subcaption}
\usepackage{tcolorbox}
\usepackage{todonotes}  
\usepackage{ulem}
\usepackage{url}
\usepackage{wrapfig}

\usepackage{microtype}      
\usepackage{lipsum}
\usepackage{fancyhdr}       
\usepackage{graphicx}       




\algnewcommand\algorithmicinput{\textbf{Input:}}
\algnewcommand\Input{\item[\algorithmicinput]}
\algnewcommand\algorithmiccontinue{\textbf{continue}}
\algnewcommand\Continue{\item[\algorithmiccontinue]}


\hypersetup{
    setpagesize=false,
    bookmarksnumbered=true,%
    bookmarksopen=true,%
    colorlinks=true,%
    linkcolor=blue,
    citecolor=red,
}

\lstset{
    basicstyle={\ttfamily},
    identifierstyle={\small},
    commentstyle={\smallitshape},
    keywordstyle={\small\bfseries},
    ndkeywordstyle={\small},
    stringstyle={\small\ttfamily},
    frame={tb},
    breaklines=true,
    columns=[l]{fullflexible},
    numbers=left,
    xrightmargin=0zw,
    xleftmargin=3zw,
    numberstyle={\scriptsize},
    stepnumber=1,
    numbersep=1zw,
    lineskip=-0.5ex
}

\theoremstyle{definition}
\newtheorem{theorem}{Theorem}[section]
\newtheorem{definition}[theorem]{Definition}

\newtheorem{proposition}[theorem]{Proposition}

\newtheorem{lemma}[theorem]{Lemma}
\newtheorem{assumption}[theorem]{Assumption}

\newtheorem*{theorem*}{Theorem}
\newtheorem*{definition*}{Definition}
\newtheorem*{corollary*}{Corollary}
\newtheorem{proposition*}{Proposition}
\newtheorem*{example*}{Example}
\newtheorem*{exercise*}{Exercise}
\newtheorem{lemma*}[theorem]{Lemma}
\newtheorem{assumption*}[theorem]{Assumption}

\theoremstyle{remark}

\newtheorem{remark*}{Remark}

\numberwithin{equation}{section}






\newcommand{\argmin}{\mathop{\rm arg \, min}\limits}

\makeatletter
\renewenvironment{proof}[1][\proofname]{\par
  \pushQED{\qed}%
  \normalfont \topsep6\p@\@plus6\p@\relax
  \trivlist
  \item\relax
  {\it
  #1\@addpunct{.}}\hspace\labelsep\ignorespaces
}{%
  \popQED\endtrivlist\@endpefalse
}
\makeatother

\newenvironment{acknowledgements}{
    \begin{center}
        \normalfont\textbf{Acknowledgements}  
    \end{center}
    \quotation
}{
    \endquotation
}

\newenvironment{keywords}{
    \begin{center}
        \normalfont\textbf{Keywords}  
    \end{center}
    \quotation
}{
    \endquotation
}

\newenvironment{amsclass}{
    \begin{center}
        \normalfont\textbf{AMS Subject Classifications}  
    \end{center}
    \quotation
}{
    \endquotation
}

\title{\bf

    Designing a Linearized Potential Function \\
    in Neural Network Optimization \\
    Using Csisz\'{a}r Type of Tsallis Entropy

}
\author{
    Keito AKIYAMA
    \\
    Mathematical Institute \\ 
    Tohoku University \\
    6-3, Aramaki Aza-Aoba, Aoba-ku, Sendai 980-8578, Japan\\
    \href{mailto:keito.akiyama.p8@dc.tohoku.ac.jp}{keito.akiyama.p8@dc.tohoku.ac.jp} \\
}
\date{}

\begin{document}

    \maketitle

    \begin{abstract}
        In recent years, 
        learning for neural networks can be viewed 
        as optimization in the space of probability measures. 
        To obtain the exponential convergence to the optimizer, 
        the regularizing term based on Shannon entropy 
        plays an important role. 
        Even though an entropy function heavily affects convergence results, 
        there is almost no result on its generalization, 
        because of the following two technical difficulties: 
        one is the lack of sufficient condition for generalized logarithmic Sobolev inequality, 
        and the other is the distributional dependence of the potential function within the gradient flow equation. 
        In this paper, 
        we establish a framework 
        that utilizes a linearized potential function
        via Csisz\'{a}r type of Tsallis entropy, 
        which is one of the generalized entropies. 
        We also show that
        our new framework enable us to derive an exponential convergence result.  
    \end{abstract}

    \begin{keywords}
        Neural networks, 
        Fokker--Planck equation, 
        Tsallis entropy. 
    \end{keywords}

    \begin{amsclass}
        35Q49, 
        49J20, 
        82C32. 
    \end{amsclass}


    \clearpage

    \section{Introduction}\label{sec:intro}

    \subsection{Problem Backgrounds and Settings}

    In machine learning, 
    which is applied to a wide range of fields 
    such as image recognition~\cite{KSH2012}, 
    natural language processing~\cite{V+2017} 
    and object detection~\cite{R2015}, 
    the objective is to extract some laws about unknown input-output data based on given data. 
    Neural networks are one of the most important models in machine learning. 
    \begin{definition}[A Neural Network]
        Let $\sigma: \mathbb{R} \to \mathbb{R}$ be a nonlinear and measurable function. 
        For $x = (x_0, x') \in \mathbb{R} \times \mathbb{R}^{d - 1}$, 
        \begin{align}
            h_x(z) 
            := x_0 \sigma (x' \cdot z), 
            \quad z \in \mathbb{R}^{d - 1}, 
            \notag
        \end{align}
        is a neural network. 
        The function $\sigma$ is referred to as an activator. 
    
    \end{definition}

    Neural network optimization is to detect suitable $x$
    that fits unknown input-output data, 
    and it is difficult to derive existence and convergence results 
    of a global minimizer since such problems become nonlinear and non-convex. 
    In the context of optimizing a neural network with one hidden layer, 
    the optimization problem with respect to $x$ is reduced to 
    the one with respect to the probability measure $\mu$ which generates $x$, 
    utilizing a mean field theory. 
    Mei et al.~\cite{MMN2018} established the framework of mean field neural networks 
    as an optimization problem of some target functional of $\mu$ adding some regularization term, 
    and showed the existence of an unique optimizer. 
    Hu et al.~\cite{H+2021} derived the first variation of the target functional, 
    and based on this work, 
    Chizat~\cite{Chi2022} showed the exponential convergence result 
    of the solution to the gradient flow equation. 

    We summarize Chizat's pioneering work. 
    Let $\lambda, \tau > 0$ be regularizing parameters. 
    The target functional in \cite{Chi2022} is 
    \begin{align}
        \widetilde{\mathcal{F}}_{\lambda, \tau}(\mu) 
        := \widetilde{\mathcal{G}}_{\lambda}(\mu) 
        + \tau \int_{\mathbb{R}^d} u \ln{u} ~dx, 
        \notag
    \end{align}
    where $\mu$ is the probability measure of parameters $x$, 
    and $u$ is the Radon--Nikodym derivative of $\mu$ with respect to $d$-dimensional Lebesgue measure~$\mathcal{L}^d$. 
    The second term indicates Shannon entropy~\cite{Sh1948} regularization term. 
    The functional $\widetilde{\mathcal{G}}_{\lambda}$ is 
    the sum of the generalization error term and the quadratic moment of $\mu$, 
    that is, 
    \begin{align} 
        \label{eq:priChi}
        \widetilde{\mathcal{G}}_{\lambda}(\mu) 
        := \mathbb{E}_{(Z, Y) \sim \mathcal{D}} 
        \left[ \ell \left( \int_{\mathbb{R}^d} h_x(Z) ~d\mu(x), Y \right) \right] 
        + \dfrac{\lambda}{2} \int_{\mathbb{R}^d} |x|^2 ~d\mu(x). 
    \end{align}
    The rigorous definitions of some notations are given in Section~\ref{subseq:note}. 
    The Wasserstein gradient flow of $\widetilde{\mathcal{F}}_{\lambda, \tau}$ is 
    the following Fokker--Planck equation, 
    whose potential function depends on the solution: 
    \begin{align}
        \partial_t \mu_t 
        = \nabla \cdot (\mu_t \nabla \widetilde{V}_{\lambda}[\mu_t]) 
        + \tau\triangle \mu_t, 
        \notag
    \end{align}
    where $\mu_t = \mu_t(x)$, 
    and $\widetilde{V}_{\lambda}[\mu] \in C^1(\mathbb{R}^d)$ is the Wasserstein first variation, that is, 
    \begin{align}
        \widetilde{V}_{\lambda}[\mu](x) 
        := \mathbb{E}_{(Z, Y) \sim \mathcal{D}} 
        \left[ \partial_1 \ell \left( \int_{\mathbb{R}^d} h_{x'}(Z) ~d\mu(x'), Y \right) h_x(z) \right] 
        + \dfrac{\lambda}{2} |x|^2. 
        \notag
    \end{align}
    Based on these settings, 
    Chizat derived an exponential decay property, 
    $\widetilde{\mathcal{F}}_{\lambda, \tau}(\mu_t) 
    - \widetilde{\mathcal{F}}_{\lambda, \tau}(\mu^*) = O(e^{-Ct})$,  
    for the distributional solution $(\mu_t)_{t \geq 0}$ of the flow, 
    some minimizer $\mu^*$ of $\widetilde{\mathcal{F}}_{\lambda, \tau}$, 
    and some constant $C > 0$. 
    The key steps of the proof in \cite{Chi2022} are as follows. 
    Due to the properties of the Wasserstein gradient flow, 
    the time derivative of target functional at the distributional solution 
    is equal to the negative 
    of relative Fisher information with respect to $e^{- \frac{\widetilde{V}_{\lambda}[\mu]}{\tau}} \mathcal{L}^d$.
    The logarithmic Sobolev inequality, 
    which indicates that Shannon relative entropy is less than or equal to Fisher Information,  
    derives the differential inequality of the target functional: 
    \begin{align}
        \dfrac{d}{dt} (\widetilde{\mathcal{F}}_{\lambda, \tau}(\mu_t) 
        - \widetilde{\mathcal{F}}_{\lambda, \tau}(\mu^*)) 
        \leq - \int_{\mathbb{R}^d} (\tau u_t \ln{u_t} + u_t \widetilde{V}_{\lambda}[\mu_t]) ~dx, 
        \notag
    \end{align}
    where the right hand side indicates Shannon relative entropy 
    between $\mu_t = u_t\mathcal{L}^d$ and $e^{- \frac{\widetilde{V}_{\lambda}[\mu]}{\tau}} \mathcal{L}^d$. 
    It provides the exponential decay property thanks to Gronwall inequality. 
    The crucial parts of the proof 
    are two criteria for a probability measure to satisfy the logarithmic Sobolev inequality: 
    Bakry--\'{E}mery criterion~\cite{BE1985} 
    and Holley--Stroock criterion~\cite{HS1987}. 
    \begin{itemize}
        \item 
        Bakry--\'{E}mery criterion: 
        If $V_0 \in C^1(\mathbb{R}^d)$ satisfies 
        $\nabla^2 V_0 \succeq \rho I_d$ for some $\rho > 0$, 
        then $e^{- V_0}\mathcal{L}^d$ satisfies the logarithmic Sobolev inequality. 

        \item 
        Holley--Stroock criterion: 
        If $\psi \in L^{\infty}$ 
        and a probability measure $\nu$ satisfies the logarithmic Sobolev inequality, 
        then $e^{-\psi}\nu$ also satisfies the logarithmic Sobolev inequality. 
    \end{itemize}
    If an activator of a neural network is bounded 
    and additional assumptions are provided, 
    then $e^{- \frac{\widetilde{V}_{\lambda}[\mu]}{\tau}}\mathcal{L}^d$ satisfies the logarithmic Sobolev inequality.
    Chizat applied the two criteria, 
    with $V_0$ as the quadratic regularization term, 
    and $\psi$ as the generalization error term. 

    In this paper, we try to generalize the Chizat's framework, 
    so that more wide class of entropies can be considered as a regularizing term. 
    To this end, we focus on Tsallis entropy,  
    a generalization of Shannon entropy, 
    which is defined by using some convex function.  
    In 1988, 
    Tsallis developed a new statistical mechanics~(Tsallis statistical mechanics) 
    to represent phenomena
    with power-type function for the steady state, 
    such as chaos and fractal~\cite{T1988}. 
    There are two types of Tsallis relative entropy 
    between $\mu = u \mathcal{L}^d$ and $\nu = v \mathcal{L}^d$ . 
    One is Bregman type~\cite{N2004}
    \begin{align}
        \int_{\mathbb{R}^d} \left\{ (F(u) - uF'(v)) - (F(v) - vF'(v)) \right\} ~dx
        \notag
    \end{align}
    for some convex function $F : \mathbb{R} \to \mathbb{R}$, 
    and another is Csisz\'{a}r type~\cite{Cs1967, CS2004}
    \begin{align}
        \int_{\mathbb{R}^d} v \varphi \left( \dfrac{u}{v} \right) ~dx
        \notag
    \end{align}
    for some convex function $\varphi : \mathbb{R} \to \mathbb{R}$. 
    Both types are equivalent up to constant when $F(s) = \varphi(s) = s\ln{s} - s$, 
    and correspond to Shannon relative entropy.  
    There are several references concerning Tsallis entropy. 
    For a generalized Sobolev inequality, refer to \cite{CGH2003}. 
    For a comprehensive geometric analysis of Tsallis entropy within the framework of information geometry, 
    see \cite{Am2016}. 
    In recent works, 
    for stability results related to the uncertainty principle 
    in the context of Tsallis entropy, see \cite{Su2025}.

    Our aim is to analyze some effects of Tsallis entropy 
    in the optimization problem for neural networks. 
    To this end, 
    we have to overcome the following two difficulties. 
    One is the restriction of Holley--Stroock criterion for generalized Sobolev inequality.  
    Even though Holley--Stroock criterion partially holds 
    for Csisz\'{a}r type of Tsallis entropy, 
    which is known as the stability for a perturbation of $\varphi$-Sobolev inequality~\cite{Cha2004},  
    but it is unknown for Bregman type of Tsallis entropy. 
    The other difficulty is the distributional dependence of the potential function. 
    It generates the reminder term of the time derivative of the potential, 
    which prevents us from analyzing the functional. 
    Some detailed technical difficulties and observations 
    concerning Bregman type of Tsallis entropy 
    or dependence of the potential function are given 
    in our another paper~\cite{Ak2024}. 
    
    The main result of this paper is 
    designing a framework for neural network optimization 
    via Csisz\'{a}r type of Tsallis entropy 
    by using some linearized potential. 
    In our framework, 
    the primary term $\mathcal{G}_{\lambda}$ is defined as 
    \begin{align}
        \label{eq:priAk}
        \mathcal{G}_{\lambda}(\mu)
        := 
        \mathbb{E}_{(Z, Y) \sim \mathcal{D}} \left[ 
            \int_{\mathbb{R}^d} 
            \ell (h_{x}(Z), Y) 
            ~d\mu(x)
        \right] 
        + \dfrac{\lambda}{2} \int_{\mathbb{R}^d} |x|^2 ~d\mu(x).  
    \end{align}
    Comparing Chizat's \eqref{eq:priChi} to our \eqref{eq:priAk}, 
    \eqref{eq:priAk} means 
    the average of the loss of neural networks,  
    whereas \eqref{eq:priChi} indicates  
    the loss of the average of neural networks. 
    This exchange makes the primary term $\mathcal{G}_{\lambda}$ linear. 
    In addition that, 
    the potential function $V_{\lambda} \in C^1(\mathbb{R}^d)$ becomes independent on $\mu$: 
    \begin{align} 
        \label{eq:linpot}
        V_{\lambda}(x)
        = \mathbb{E}_{(Z, Y) \sim \mathcal{D}} [\ell (h_{x}(Z), Y)] 
        + \dfrac{\lambda}{2} |x|^2. 
    \end{align}
    We set the target functional $\mathcal{E}_{\varphi, \lambda, \tau}(\mu) $ as follows. 
    \begin{definition}[Target functional]
        For a probability measure $\mu$ which is absolutely continuous 
        with respect to Lebesgue measure, 
        \begin{align} \label{eq:funalA}
            \mathcal{E}_{\varphi, \lambda, \tau}(\mu) 
            := 
                \int_{\mathbb{R}^d} 
                    \varphi \left(u e^{\frac{V_{\lambda}}{\tau}} \right) e^{- \frac{V_{\lambda}}{\tau}}~dx, 
        \end{align}
        where
        $u$ is the Radon--Nikodym derivative of $\mu$ with respect to $\mathcal{L}^d$~($\mu = u \mathcal{L}^d$), 
        and $\varphi : [0, \infty) \to [0, \infty)$ is a function 
        satisfying Assumption~\ref{ass:ass} below. 
    \end{definition}
    We can derive the exponential decay property of $\mathcal{E}_{\varphi, \lambda, \tau}$ 
    by employing the linear Kolmogorov--Fokker--Planck equation, 
    which is widely summarized in \cite{B+2015}
    and quickly surveyed in \cite{DL2018} , 
    More precise statement 
    of our main result will be stated in Theorem~\ref{thm:main} 
    after preparing some notations in Section~\ref{subseq:note}. 

    Here is an additional remark for our framework. 
    By replacing $\varphi$ with $\varphi_{1, \tau}(s) := \tau (s \ln{s} - (s - 1))$, 
    $\mathcal{E}_{\varphi, \lambda, \tau}(\mu)$ is equivalent up to a constant to 
    the functional added by Shannon entropy regularization term: 
    \begin{align}
        \mathcal{E}_{\varphi_{1, \tau}, \lambda, \tau}(\mu) 
        = \tau \int_{\mathbb{R}^d} u \ln{u} ~dx + \int_{\mathbb{R}^d} u V_{\lambda} dx. 
        \notag
    \end{align}
    It could be said that 
    the functional $\mathcal{E}_{\varphi, \lambda, \tau}$ contains 
    the optimization problem for neural networks via Shannon entropy using the linearized potential. 

    Our contributions are summarized as follows: 
    \begin{itemize}
        \item 
        We established a framework for an optimization problem 
        of one-hidden-layered neural network via Csisz\'{a}r type of Tsallis entropy, 
        utilizing the linearized potential function~\eqref{eq:linpot}. 
        Our formulation contains the cases utilizing Shannon entropy regularization term. 

        \item 
        We obtained the existence and the exponential decay results 
        in the context of our framework. 
    \end{itemize}
    We provide a figure comparing the position of recent works, 
    the work presented in this paper, 
    and the works that are remained open. 
    Some technical difficulties appeared in open cases are reported in \cite{Ak2024}. 
    \begin{table}[H]
        \centering
        \begin{tabular}{c||c:c|c:c}
            \multicolumn{1}{r||}{Entropy} & \multicolumn{2}{c|}{Shannon} & \multicolumn{2}{c}{Tsallis} \\ \cline{2-5}
            \multicolumn{1}{l||}{Potential} & Csisz\'{a}r & Bregman & Csisz\'{a}r & Bregman \\ \hline
            dependent on $\mu$ & open & \cite{Chi2022} & open & open \\ \hline
            independent on $\mu$ & \multicolumn{2}{c|}{\it This Paper} & {\it This Paper} & open \\
        \end{tabular}
    \end{table}


    The rest of this paper is organized as follows:
    In Section~\ref{subseq:note}, 
    we list the notations of some symbols. 
    In Section~\ref{sec:notate}, 
    we prepare some analytic assumptions and tools, 
    and state the main theorem in Theorem~\ref{thm:main}. 
    In Section~\ref{sec:pre}, 
    we introduce some auxiliary lemmas for proof of the main theorem. 
    In Section~\ref{sec:proof}, 
    we prove Theorem~\ref{thm:main}. 
    In Section~\ref{sec:appen}, 
    we state a detailed proof of the existence of the solution 
    of Kolmogorov--Fokker--Planck equation. 

    \subsection{Notations} \label{subseq:note}

    Let $\mathcal{L}^d$ be $d$-dimensional Lebesgue measure. 
    For a set $\mathcal{Y} \subset \mathbb{R}$, 
    we call a continuous function
    $\ell: \mathcal{Y} \times \mathcal{Y} \ni (y_1, y_2) \mapsto \ell(y_1, y_2) \in [0, \infty)$ 
    as a loss function, 
    where $\ell(y_1, y_2)$ represents an error between $y_1$ and $y_2$. 
    For example, 
    a squared loss function: $\ell(y_1, y_2) := \dfrac{1}{2}(y_1 - y_2)^2$, 
    and a logistic loss function: $\ell(y_1, y_2) := \ln{(1 + \exp{(-y_1 y_2)})}$. 
    

    For sets $\mathcal{Z} \subset \mathbb{R}^d$ and
    $\mathcal{Y} \subset \mathbb{R}$,  
    a measure $\mathcal{D}$ on $\mathcal{Z} \times \mathcal{Y}$, 
    and a $\mathcal{D}$-measurable function $f$, 
    we denote a mean of $f$ with respect to $\mathcal{D}$ by
    \begin{align}
        \mathbb{E}_{(Z, Y) \sim \mathcal{D}} [f(Z, Y)]
        := \iint_{\mathcal{Z} \times \mathcal{Y}} f(z, y) ~d\mathcal{D}(z, y),  
        \notag
    \end{align}
    where $(Z, Y)$ indicates a $\mathcal{Z} \times \mathcal{Y}$-valued random variable. 
    Under these notations, 
    \begin{align}
        \mathbb{E}_{(Z, Y) \sim \mathcal{D}} \left[ \ell (h_{x}(Z), Y) \right]
        \notag
    \end{align}
    is called by a generalization error of the neural network $h_x$.

    \paragraph{Function Spaces.}

    We define $C_b$ 
    as the space of continuous and bounded real functions on $\mathbb{R}^d$, 
    and $C_0^{\infty}$ as the space of smooth functions 
    with compact support in $\mathbb{R}^d$. 
    For a measure $\nu$ on $\mathbb{R}^d$
    ~(which is not necessarily $\mathcal{L}^d$), 
    $L^p(\nu) \equiv L^p(\mathbb{R}^d; \nu)$~($1 \leq p \leq \infty$) denotes 
    the space of all $\nu$-measurable 
    and $p$-integrable functions on $\mathbb{R}^d$. 
    Let $W^{1, p}(\nu) \equiv W^{1, p}(\mathbb{R}^d; \nu)$ 
    be the Sobolev space on $\mathbb{R}^d$ with respect to the measure $\nu$, 
    and in particular, 
    $H^1(\nu) := W^{1, 2}(\nu)$. 
    In case $\nu = \mathcal{L}^d$, 
    we just note $L^p$, $W^{1, p}$ and $H^1$, respectively. 
    By $C^1([0, \infty); H^1(\nu))$ 
    we denote the space of all $C^1$ functions 
    from $[0, \infty)$ to $H^1(\nu)$, 
    which is called as Bochner space. 

    $\mathcal{P} \equiv \mathcal{P}(\mathbb{R}^d)$ is 
    the set of Borel probability measures on $\mathbb{R}^d$. 
    $\mathcal{P}_2$ denotes the set of all probability measures with finite quadratic moments,  
    and $\mathcal{P}_2^{\rm ac}$ is the subspaces of $\mathcal{P}_2$, 
    which consists of absolutely continuous measures with respect to $\mathcal{L}^d$. 
    In other words,  
    \begin{align}
        \mathcal{P}_2 \equiv \mathcal{P}_2(\mathbb{R}^d) 
        &:= \left\{ \mu \in \mathcal{P}: \int_{\mathbb{R}^d} |x|^2 ~d\mu(x) < \infty \right\}, 
        \notag \\
        \mathcal{P}_2^{\rm ac} \equiv \mathcal{P}_2^{\rm ac}(\mathbb{R}^d) 
        &:= \left\{ \mu \in \mathcal{P}_2: \mu \ll \mathcal{L}^d \right\}. 
        \notag
    \end{align}
    For $\mu_1, \mu_2 \in \mathcal{P}_2$, 
    the following $W_2$ defines a distance in $\mathcal{P}_2$, 
    and $(\mathcal{P}_2, W_2)$ is a complete and separable metric space: 
    \begin{align}
        W_2(\mu_1, \nu_2) 
        := \min_{\bm{\mu} \in \Gamma(\mu_1, \mu_2)} 
        \left\{ \int_{\mathbb{R}^d \times \mathbb{R}^d} |x_1 - x_2|^2 ~d\bm{\mu}(x_1, x_2) \right\}^{\frac{1}{2}}, 
        \notag
    \end{align}
    where $\Gamma(\mu_1, \mu_2)$ indicates the class of transport plans between $\mu_1$ and $\mu_2$. 
    For some detailed property of $(\mathcal{P}_2, W_2)$, 
    please refer to \cite{AGS2008, AS2007, Sa2015}.  


    



    \section{Preliminaries and Main Theorem}\label{sec:notate}


    We impose some additional conditions 
    for a loss function $\ell$, 
    an activation function $\sigma$,  
    and an convex function $\varphi$ for Csisz\'{a}r type of Tsallis entropy. 

    \begin{assumption}[Assumptions for $\ell$, $\sigma$ and $\varphi$] \; \label{ass:ass}
        Our assumptions on $\ell$, $\sigma$ and $\varphi$ are the following:
        \begin{enumerate}
            \item[(L)] 
            $\ell \in C^2(\mathcal{Y} \times \mathcal{Y})$ is bounded. 

            \item[(S)] 
            $\sigma \in C^2(\mathbb{R})$. 


            \item[(P1)] 
            $\varphi \in C^2([0, \infty))$. 
            
            \item[(P2)] 
            $\varphi(1) = 0$. 

            \item[(P3)] 
            $\varphi$ is non-negative and strictly convex. 


            \item[(P4)] 
            $\varphi$ is superlinear, 
            that is, 
            $\displaystyle\lim_{s \to \infty} \dfrac{\varphi(s)}{s} = \infty$. 
        \end{enumerate}    
    \end{assumption}

    A typical example of functions satisfying (S) 
    is $\sigma(s) := \dfrac{1}{2}(1 + \arctan(s))$. 
    Some example functions satisfying (P1) to (P4) includes
    $\varphi_{1, \tau}(s) := \tau (s \ln{s} - (s - 1))$
    and $\varphi_{q, \tau}(s) 
    := \dfrac{\tau}{q - 1}(s^q - 1 - q(s - 1))$ for $q > 1$. 



    Deriving an convergence to a minimizer of $\mathcal{E}_{\varphi, \lambda, \tau}(\mu)$, 
    we consider the following Kolmogorov--Fokker--Planck equation: 
    \begin{align} \label{eq:KFP}
        \begin{cases}
            \partial_t w 
            = \triangle w - \tau^{-1} \nabla V_{\lambda} \cdot \nabla w, & (t, x) \in (0, \infty) \times \mathbb{R}^d, \\
            w(0, \cdot) = w_0, & x \in \mathbb{R}^d. 
        \end{cases}
    \end{align}

    \begin{definition}[$C^1([0, \infty); H^1(\gamma))$-solution of \eqref{eq:KFP}] 
        Let $\gamma := e^{-\frac{V_{\lambda}}{\tau}} \mathcal{L}^d$. 
        $w \in C^1([0, \infty); H^1(\gamma))$ is 
        a solution of \eqref{eq:KFP}, 
        if the identities
        \begin{align}
            \begin{dcases}
                \int_{\mathbb{R}^d} \dfrac{\partial w}{\partial t}(t, x)\phi(x) ~d\gamma(x) 
                = - \int_{\mathbb{R}^d} \nabla w(t, x) \cdot \nabla \phi(x) ~d\gamma(x), 
                & (t, x) \in (0, \infty) \times \mathbb{R}^d, \\
                w(0, \cdot) = w_0, & x \in \mathbb{R}^d 
            \end{dcases}
            \notag
        \end{align}
        hold for all $\phi \in C^{\infty}_0(\gamma)$. 
    \end{definition}

    \begin{proposition} \label{prop:HY}
        If $w_0 \in H^1(\gamma) \cap L^{\infty}(\gamma)$ 
        holds that $w_0^- \equiv 0$~($\gamma$-a.e.) 
        and $\|w_0\|_{L^1(\gamma)} = 1$,  
        then, 
        there exists an unique $C^1([0, \infty); H^1(\gamma))$-solution of \eqref{eq:KFP} $w$. 
        Moreover, 
        the following properties are satisfied: 
        for all $t \in [0, \infty)$, 
        \begin{enumerate}
            \item[(a)]
            $w(t) \in L^{\infty}(\gamma)$. 

            \item[(b)] 
            ${w(t)}^- \equiv 0$~($\gamma$-a.e.).
            
            \item[(c)] 
            $\|w(t)\|_{L^1(\gamma)} = 1$. 
        \end{enumerate}
    \end{proposition}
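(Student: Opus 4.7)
The plan is to recognize \eqref{eq:KFP} as the $L^2(\gamma)$-gradient flow of the symmetric Dirichlet form
\[
    \mathcal{E}(u,v) := \int_{\mathbb{R}^d} \nabla u \cdot \nabla v \, d\gamma, \qquad \mathrm{Dom}(\mathcal{E}) = H^1(\gamma),
\]
where $\gamma = e^{-V_\lambda/\tau}\mathcal{L}^d$. The pointwise identity
\[
    \triangle w - \tau^{-1}\nabla V_\lambda \cdot \nabla w
    = e^{V_\lambda/\tau}\nabla \cdot \left( e^{-V_\lambda/\tau} \nabla w \right)
\]
shows that the weak formulation in the definition is exactly the abstract evolution equation $\partial_t w = L w$ associated with $\mathcal{E}$.

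First, I would verify that $\gamma$ is a finite Borel measure. Since $\ell$ is bounded by (L) and $\sigma \in C^2$ by (S), the potential $V_\lambda$ from \eqref{eq:linpot} lies in $C^2(\mathbb{R}^d)$ and admits the coercive lower bound $V_\lambda(x) \geq \tfrac{\lambda}{2}|x|^2 - C$ for some $C > 0$; hence $\gamma(\mathbb{R}^d) < \infty$ and all constants belong to $L^p(\gamma)$ for every $p$. Then $(\mathcal{E}, H^1(\gamma))$ is densely defined, closed, symmetric and non-negative on $L^2(\gamma)$. Standard spectral theory provides a self-adjoint generator $L$ and an analytic contraction semigroup $T_t := e^{tL}$ on $L^2(\gamma)$. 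Setting $w(t) := T_t w_0$, analyticity together with $w_0 \in H^1(\gamma)$ yields $w \in C^1([0,\infty); H^1(\gamma))$, and $w$ satisfies the weak formulation. Uniqueness is obtained by testing the difference of two solutions against itself and applying the resulting $L^2(\gamma)$-energy estimate.

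For the three additional properties, I would argue by truncation directly in the weak formulation. For (b), test against $w^- \in H^1(\gamma)$; since $\nabla w \cdot \nabla w^- = -|\nabla w^-|^2$ a.e., one obtains
\[
    \frac{1}{2}\frac{d}{dt}\|w^-(t)\|_{L^2(\gamma)}^2 = -\int_{\mathbb{R}^d} |\nabla w^-|^2 \, d\gamma \leq 0,
\]
so $w^-(t) \equiv 0$ for every $t \geq 0$. For (a), the same computation applied to $(w-M)^+$ with $M := \|w_0\|_{L^\infty(\gamma)}$ gives $(w(t) - M)^+ \equiv 0$, hence $w(t) \leq M$ $\gamma$-a.e.; combined with (b) this delivers $\|w(t)\|_{L^\infty(\gamma)} \leq M$. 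For (c), the finiteness of $\gamma$ ensures $1 \in H^1(\gamma)$ with zero gradient; approximating the constant $1$ by smooth compactly supported cutoffs $\eta_R$ and passing to the limit $R \to \infty$ via dominated convergence (using the $L^\infty$ bound from (a)) in the weak formulation yields $\frac{d}{dt}\int_{\mathbb{R}^d} w(t) \, d\gamma = 0$, so $\|w(t)\|_{L^1(\gamma)} = \|w_0\|_{L^1(\gamma)} = 1$ by (b).

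The main obstacle I expect is upgrading from the standard variational regularity $L^2_{\mathrm{loc}}([0,\infty); H^1(\gamma)) \cap C([0,\infty); L^2(\gamma))$ to $C^1([0,\infty); H^1(\gamma))$: this requires analyticity of the semigroup together with continuity up to $t = 0$, which in turn calls for care that the form domain coincides with the weighted Sobolev space $H^1(\gamma)$ and that $V_\lambda$ is smooth enough to justify the pointwise rewriting of the generator. A secondary technical point is the truncation argument, which in the weighted setting requires checking that $w^-$ and $(w-M)^+$ belong to $H^1(\gamma)$ with the expected chain rule; this is standard but relies on the finiteness of $\gamma$ and the $L^\infty(\gamma)$-hypothesis on the initial datum.
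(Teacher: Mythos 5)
Your proposal is essentially correct and reaches the same conclusions, but it runs through a genuinely different functional-analytic setting than the paper. You treat \eqref{eq:KFP} as the gradient flow of the symmetric Dirichlet form $\int \nabla u\cdot\nabla v\,d\gamma$ on $L^2(\gamma)$, invoke the self-adjoint generator and its analytic contraction semigroup, and then obtain (a), (b), (c) by explicit truncation tests: $w^-$ for positivity, $(w-M)^+$ for the $L^\infty$ bound, and cutoff approximations of the constant $1$ for mass conservation. The paper instead works entirely on the Hilbert space $H^1(\gamma)$: it identifies the form with a bounded operator $L$ on $H^1(\gamma)$ via Lax--Milgram, proves that $L$ is compact and maximal monotone (using the Fredholm alternative for maximality), and applies the Hille--Yosida theorem there; property (b) is then proved by exactly your $w^-$ argument, while (a) and (c) are obtained by citing the sub-Markovian and mass-conserving properties of the induced $C_0$-contraction semigroup rather than by truncation. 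Your route makes (a) and (c) self-contained where the paper defers to a reference, and it keeps the generator as the ``correct'' $L^2(\gamma)$-operator matching the weak formulation in the definition of solution. The paper's route buys the $C^1([0,\infty);H^1(\gamma))$ regularity for free, since its $L$ is bounded on $H^1(\gamma)$. This is precisely the point you honestly flag as your main obstacle, and it is a real one: for the $L^2(\gamma)$ self-adjoint generator, analyticity gives smoothness only for $t>0$, and $C^1$ regularity in $H^1(\gamma)$ up to $t=0$ is not a consequence of $w_0\in H^1(\gamma)\cap L^\infty(\gamma)$ alone --- it would require $w_0$ in a higher-order domain of the generator, or a weakening of the solution class near $t=0$. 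You should either strengthen the hypothesis on $w_0$, restrict the $C^1$ claim to $(0,\infty)$, or adopt the paper's $H^1(\gamma)$-valued operator formulation to close this step.
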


    \begin{proof}[Outlined proof of Proposition~\ref{prop:HY}]
        The operator 
        $L_{\lambda} 
        := \triangle - \tau^{-1}\nabla V_{\lambda} \cdot \nabla$
        is a maximal monotone operator on $H^1(\gamma)$. 
        It follows 
        that there exists an unique $C^1([0, \infty); H^1(\gamma))$-solution of \eqref{eq:KFP} 
        according to Hille--Yosida theorem. 
        By a direct calculation, 
        the properties (a), (b) and (c) are also followed. 
        A detailed proof is provided in the appendix.
    \end{proof}

    Under these notations and assumptions, 
    we present our main theorem below. 

    \begin{theorem}[Main Theorem] \label{thm:main}
        Let $\mathcal{Z} \subset \mathbb{R}^d$ and
        $\mathcal{Y} \subset \mathbb{R}$ be bounded sets,  
        and $\mathcal{D}$ be a finite measure on $\mathcal{Z} \times \mathcal{Y}$. 
        Provide that 
        $\ell$, $\sigma$, and $\varphi$ fulfills Assumption~\ref{ass:ass}. 
        For $\mu \in \mathcal{P}_2$, 
        we define $\mathcal{E}_{\varphi, \lambda, \tau}(\mu)$ as \eqref{eq:funalA}. 
        Then, 
        there exists an unique minimizer $\mu_{\varphi, \lambda, \tau}^*$ 
        of $\mathcal{E}_{\varphi, \lambda, \tau}$, 
        that is, $\mu_{\varphi, \lambda, \tau}^* $ satisfies
        \begin{align}
            \mu_{\varphi, \lambda, \tau}^* 
            = \argmin_{\mu \in \mathcal{P}_2} \mathcal{E}_{\varphi, \lambda, \tau}(\mu). 
            \notag
        \end{align}
    
        Let $\gamma := e^{-\frac{V_{\lambda}}{\tau}} \mathcal{L}^d$. 
        For arbitrary $w_0 \in H^1(\gamma) \cap L^{\infty}(\gamma)$ 
        with
        $w_0^- \equiv 0$~($\gamma$-a.e.) and 
        $\|w_0\|_{L^1(\gamma)} = 1$, 
        consider an unique solution $w \in C^1([0, \infty); H^1(\gamma))$ of \eqref{eq:KFP}. 
        Define $\mu_0, \mu_t \in \mathcal{P}_2^{\rm ac}$ 
        as $\mu_0 = w_0 \gamma$ 
        and $\mu_t = w(t, \cdot) \gamma$, respectively. 
        If $\mathcal{E}_{\varphi, \lambda, \tau}(\mu_0) < \infty$, 
        then
        \begin{align}
            \mathcal{E}_{\varphi, \lambda, \tau}(\mu_t) 
            - \mathcal{E}_{\varphi, \lambda, \tau}(\mu_{\varphi, \lambda, \tau}^*) 
            \leq e^{- \Lambda t} \left( \mathcal{E}_{\varphi, \lambda, \tau}(\mu_0) 
            - \mathcal{E}_{\varphi, \lambda, \tau}(\mu_{\varphi, \lambda, \tau}^*) \right), 
            \notag
        \end{align}
        where $\Lambda := 2\lambda \tau^{-1} e^{- 2M \tau^{-1}}$ is a constant, 
        and $M > 0$ satisfies the inequality \eqref{eq:bdd}. 
    \end{theorem}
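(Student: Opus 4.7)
My plan is to recognize $\mathcal{E}_{\varphi,\lambda,\tau}$ as a $\varphi$-divergence against the unnormalized Gibbs-type reference $\gamma = e^{-V_\lambda/\tau}\mathcal{L}^d$, compute its dissipation along the linear equation~\eqref{eq:KFP}, close the estimate via a $\varphi$-Sobolev inequality for $\gamma$, and apply Gronwall's lemma. The structural property enabling this is the linearization~\eqref{eq:linpot}: because $V_\lambda$ is independent of $\mu$, the reference $\gamma$ is fixed once and for all.

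\textbf{Minimizer.} Introducing $w := u\,e^{V_\lambda/\tau}$, which is the Radon--Nikodym derivative of $\mu$ with respect to $\gamma$, the functional rewrites as $\mathcal{E}_{\varphi,\lambda,\tau}(\mu) = \int_{\mathbb{R}^d}\varphi(w)\,d\gamma$ under the mass constraint $\int w\,d\gamma = 1$. Since $V_\lambda(x) \geq \frac{\lambda}{2}|x|^2$, $\gamma$ has Gaussian tails and $Z_\gamma := \gamma(\mathbb{R}^d)$ is finite. Jensen's inequality combined with strict convexity (P3) yields $\int\varphi(w)\,d\gamma \geq Z_\gamma\,\varphi(1/Z_\gamma)$, with equality iff $w \equiv 1/Z_\gamma$; hence the unique minimizer is $\mu^*_{\varphi,\lambda,\tau} = \gamma/Z_\gamma \in \mathcal{P}_2^{\rm ac}$.

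\textbf{Dissipation identity and $\varphi$-Sobolev inequality.} For the solution $w\in C^1([0,\infty);H^1(\gamma))$ given by Proposition~\ref{prop:HY}, differentiating $t\mapsto\int\varphi(w)\,d\gamma$ and using the elementary identity
\begin{equation*}
  e^{-V_\lambda/\tau}\bigl(\Delta w - \tau^{-1}\nabla V_\lambda\cdot\nabla w\bigr) = \nabla\cdot\bigl(e^{-V_\lambda/\tau}\nabla w\bigr)
\end{equation*}
to integrate by parts produces the dissipation identity
\begin{equation*}
  \frac{d}{dt}\mathcal{E}_{\varphi,\lambda,\tau}(\mu_t) = -\int_{\mathbb{R}^d}\varphi''(w)\,|\nabla w|^2\,d\gamma,
\end{equation*}
which is a non-negative $\varphi$-Fisher information of $w$ relative to $\gamma$ by (P3). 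I then establish a $\varphi$-Sobolev inequality
\begin{equation*}
  \int\varphi(w)\,d\gamma - \mathcal{E}_{\varphi,\lambda,\tau}(\mu^*_{\varphi,\lambda,\tau}) \leq \frac{1}{\Lambda}\int\varphi''(w)\,|\nabla w|^2\,d\gamma
\end{equation*}
by decomposing $V_\lambda/\tau = \frac{\lambda}{2\tau}|x|^2 + \frac{1}{\tau}\mathbb{E}[\ell(h_x(Z),Y)]$: the quadratic part is $(\lambda/\tau)$-convex, so the Bakry--\'{E}mery principle extended to $\varphi$-entropies delivers a Gaussian $\varphi$-Sobolev inequality with constant $2\lambda/\tau$; the second part is bounded with oscillation at most $2M/\tau$ thanks to (L), (S) and the boundedness of $\mathcal{Z}, \mathcal{Y}$, and a Holley--Stroock-type perturbation principle for $\varphi$-Sobolev inequalities~\cite{Cha2004} absorbs it at the multiplicative cost $e^{-2M/\tau}$, giving $\Lambda = 2\lambda\tau^{-1}e^{-2M\tau^{-1}}$.

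\textbf{Gronwall and main obstacle.} Combining the two preceding displays yields the linear differential inequality
\begin{equation*}
  \frac{d}{dt}\bigl(\mathcal{E}_{\varphi,\lambda,\tau}(\mu_t) - \mathcal{E}_{\varphi,\lambda,\tau}(\mu^*_{\varphi,\lambda,\tau})\bigr) \leq -\Lambda\bigl(\mathcal{E}_{\varphi,\lambda,\tau}(\mu_t) - \mathcal{E}_{\varphi,\lambda,\tau}(\mu^*_{\varphi,\lambda,\tau})\bigr),
\end{equation*}
and Gronwall's lemma concludes. The main obstacle is the $\varphi$-Sobolev inequality: both Bakry--\'{E}mery and Holley--Stroock must be used in their $\varphi$-entropy versions rather than in the familiar relative-entropy form, and the constants $2\lambda/\tau$ and $e^{-2M/\tau}$ must be tracked precisely. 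It is exactly here that linearizing the potential pays off---standard perturbation arguments apply to the fixed reference $\gamma$, sidestepping the distributional-dependence obstruction highlighted in the introduction.
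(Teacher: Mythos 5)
Your argument is correct, and its dissipation--$\varphi$-Sobolev--Gronwall half coincides with the paper's: the same identity $\frac{d}{dt}\mathcal{E}_{\varphi,\lambda,\tau}(\mu_t) = -\int \varphi''(w)|\nabla w|^2\,d\gamma$ obtained by integrating by parts against $\gamma$, the same splitting of $V_\lambda/\tau$ into a $(\lambda/\tau)$-convex quadratic part handled by Bakry--\'{E}mery (Lemma~\ref{lem:BE}) and a bounded perturbation of oscillation at most $2M/\tau$ handled by Holley--Stroock (Lemma~\ref{lem:HS}), yielding the same constant $\Lambda$ as in Lemma~\ref{lem:PS}. Where you genuinely diverge is the minimizer. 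The paper runs a direct method: lower semicontinuity of $\mathcal{E}_{\varphi,\lambda,\tau}$ via the duality formula, uniform integrability of a minimizing sequence via the decomposition $\varphi(s)=s\psi(s)+\varphi(0)$ of Lemma~\ref{lem:lindecomp} together with Dunford--Pettis, and uniqueness from strict convexity. You instead note that $\mathcal{E}_{\varphi,\lambda,\tau}(\mu)=\int\varphi(w)\,d\gamma$ with $w=d\mu/d\gamma$ and $\int w\,d\gamma=1$, and apply Jensen's inequality with respect to the probability measure $\gamma/Z_\gamma$: this identifies the minimizer explicitly as $\mu^*_{\varphi,\lambda,\tau}=\gamma/Z_\gamma$ with minimal value $Z_\gamma\varphi(1/Z_\gamma)$, uniqueness coming from the equality case of strict convexity. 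Your route is shorter, yields more information (an explicit minimizer, consistent with the long-time limit of \eqref{eq:KFP}), and lets you write the $\varphi$-Sobolev inequality in its properly centred form with $\mathcal{E}_{\varphi,\lambda,\tau}(\mu^*_{\varphi,\lambda,\tau})$ subtracted, which is slightly sharper than Lemma~\ref{lem:PS} combined with $\mathcal{E}_{\varphi,\lambda,\tau}(\mu^*_{\varphi,\lambda,\tau})\ge 0$ as the paper uses it. What the paper's heavier machinery buys is robustness: the compactness argument survives in settings where the potential depends on $\mu$ and no explicit minimizer is available, which is the direction of generalization the introduction advertises. Note that, like the paper, your Jensen argument only compares against measures absolutely continuous with respect to $\mathcal{L}^d$ (equivalently, with respect to $\gamma$); neither proof addresses singular competitors in $\mathcal{P}_2$, so this is a limitation shared with the original rather than a gap you introduced.
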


    \section{Some Auxiliary Lemmas} \label{sec:pre}

    In this section, 
    we state some lemmas for the proof of Theorem~\ref{thm:main}.



    \begin{lemma}[Boundedness of the generalization error term] \label{lem:bdd}
        There exists $M > 0$ 
        such that the inequality
        \begin{align} \label{eq:bdd}
            |\mathbb{E}_{(Z, Y) \sim \mathcal{D}} [\ell (h_{x}(Z), Y)]| 
            \leq M
        \end{align}
        holds for all $x \in \mathbb{R}^d$. 
    \end{lemma}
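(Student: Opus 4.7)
The plan is to combine two ingredients that are already on the table: the uniform bound on the loss supplied by Assumption~\ref{ass:ass}(L), and the finiteness of the data measure $\mathcal{D}$ from the standing hypotheses of Theorem~\ref{thm:main}. The lemma records, for later use, what is really a one-line consequence of these two facts, and its main role is to fix the constant $M$ that will appear inside the exponential rate $\Lambda = 2\lambda\tau^{-1} e^{-2M\tau^{-1}}$.

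First, I would invoke Assumption~\ref{ass:ass}(L) directly: since $\ell \in C^2(\mathcal{Y}\times\mathcal{Y})$ is bounded, there exists $M_0 > 0$ with $|\ell(y_1,y_2)| \leq M_0$ for every pair $(y_1,y_2)$ at which $\ell$ is evaluated. This bound is completely independent of $x\in\mathbb{R}^d$, $z\in\mathcal{Z}$, and $y\in\mathcal{Y}$, so it transfers verbatim to $\ell(h_x(z),y)$. Next, because $\mathcal{D}$ is a finite measure on $\mathcal{Z}\times\mathcal{Y}$, the total mass $C:=\mathcal{D}(\mathcal{Z}\times\mathcal{Y})$ is finite. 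Passing to the expectation and using the triangle inequality for integrals, for every $x\in\mathbb{R}^d$,
\begin{align}
|\mathbb{E}_{(Z,Y)\sim\mathcal{D}}[\ell(h_x(Z),Y)]|
\leq \int_{\mathcal{Z}\times\mathcal{Y}} |\ell(h_x(z),y)| \, d\mathcal{D}(z,y)
\leq M_0 \cdot C. \notag
\end{align}
Setting $M := M_0\cdot C$ then yields the desired uniform bound and concludes the argument.

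There is no genuine obstacle here; the statement is essentially a bookkeeping lemma. The only interpretive care needed is ensuring that $\ell(h_x(Z),Y)$ is well-defined as an integrand, which is absorbed into the standing setup: the bounded continuous $\ell$ from (L) applies on the range actually reached by evaluation, and the uniformity in $x$ is automatic because neither $M_0$ nor $C$ depends on $x$.
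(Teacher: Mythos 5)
Your proposal is correct and follows essentially the same route as the paper, which likewise derives the bound from the boundedness of $\ell$ in Assumption~\ref{ass:ass}(L) together with the finiteness of $\mathcal{D}$; you simply make the triangle-inequality step and the choice $M = M_0 \cdot C$ explicit where the paper leaves them as a one-line remark.
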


    \begin{proof}
        The claim of lemma follows 
        from the boundedness of $\mathcal{Z}$ and $\mathcal{Y}$, 
        the finiteness of $\mathcal{D}$, 
        and the continuity and the boundedness of $\ell$. 
    \end{proof}

    \begin{lemma}[Finiteness of $\gamma$]
        A measure $\gamma := e^{-\frac{V_{\lambda}}{\tau}} \mathcal{L}^d$ 
        is finite. 
    \end{lemma}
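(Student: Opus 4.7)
The plan is to combine the uniform bound on the generalization error term supplied by Lemma~\ref{lem:bdd} with the quadratic growth of the regularization term in $V_{\lambda}$ to reduce the claim to the integrability of a Gaussian density on $\mathbb{R}^d$.

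First I would recall the definition of the linearized potential $V_{\lambda}$ from \eqref{eq:linpot} and use Lemma~\ref{lem:bdd} to extract a lower bound of the form
\begin{align}
V_{\lambda}(x) \geq -M + \dfrac{\lambda}{2}|x|^2, \quad x \in \mathbb{R}^d. \notag
\end{align}
Since $\tau > 0$, exponentiating with a minus sign reverses the inequality, yielding the pointwise upper bound
\begin{align}
e^{-\frac{V_{\lambda}(x)}{\tau}} \leq e^{\frac{M}{\tau}}\, e^{-\frac{\lambda}{2\tau}|x|^2}. \notag
\end{align}

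Next, I would integrate this estimate over $\mathbb{R}^d$. The right-hand side is a constant multiple of a centered Gaussian density with variance $\tau/\lambda$ in each coordinate, so
\begin{align}
\gamma(\mathbb{R}^d)
= \int_{\mathbb{R}^d} e^{-\frac{V_{\lambda}(x)}{\tau}} \, dx
\leq e^{\frac{M}{\tau}} \left( \dfrac{2\pi \tau}{\lambda} \right)^{\frac{d}{2}} < \infty, \notag
\end{align}
which is the desired finiteness.

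There is no genuine obstacle here: the boundedness of the loss contribution is already isolated in Lemma~\ref{lem:bdd}, and the quadratic term $\frac{\lambda}{2}|x|^2$ in $V_{\lambda}$ guarantees Gaussian decay of the integrand. The only point that deserves a brief comment in the write-up is that both $\lambda > 0$ and $\tau > 0$ are strictly positive, which is implicit in the standing setup; otherwise the Gaussian integral would diverge. The resulting explicit constant $e^{M/\tau}(2\pi \tau/\lambda)^{d/2}$ is not required later, but I would record it since it makes the estimate self-contained.
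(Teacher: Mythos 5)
Your proposal is correct and follows essentially the same route as the paper: bound the generalization error term by $M$ via Lemma~\ref{lem:bdd}, dominate $e^{-V_{\lambda}/\tau}$ by $e^{M/\tau}e^{-\frac{\lambda}{2\tau}|x|^2}$, and evaluate the Gaussian integral to obtain the same constant $e^{M/\tau}\left(\frac{2\pi\tau}{\lambda}\right)^{d/2}$. No discrepancies to report.
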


    \begin{proof}
        Employing Gaussian integral,
        \begin{align}
            \int_{\mathbb{R}^d} ~d\gamma(x) 
            &= \int_{\mathbb{R}^d} 
            e^{- \frac{1}{\tau}\mathbb{E}_{(Z, Y) \sim \mathcal{D}} [\ell (h_{x}(Z), Y)]}
            e^{- \frac{\lambda}{2\tau} |x|^2} ~dx 
            \notag 
            \\
            &\leq e^{\frac{M}{\tau}} \int_{\mathbb{R}^d} 
            e^{- \frac{\lambda}{2\tau} |x|^2} ~dx 
            \notag 
            \\
            &= e^{\frac{M}{\tau}} \left( \dfrac{2\pi \tau}{\lambda} \right)^{\frac{d}{2}}
            \notag 
            \\ 
            &< \infty, 
            \notag
        \end{align}
        which indicates the boundedness of $\gamma$. 
    \end{proof}

    \begin{lemma}[Decomposition property of convex function]\label{lem:lindecomp}
        Under the assumption (P1), (P3) and (P4), 
        there exists a strictly increasing function $\psi: [0, \infty) \to \mathbb{R}$, 
        such that
        \begin{align}
            \lim_{s \to \infty} \psi(s) 
            = \infty, 
            \notag
        \end{align}
        and
        \begin{align}
            \varphi(s) = s \psi(s) + \varphi(0).
            \notag
        \end{align}
    \end{lemma}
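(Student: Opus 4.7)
The plan is to construct $\psi$ explicitly as the chord-slope function of $\varphi$ anchored at the origin. I would set
\[
\psi(s) := \frac{\varphi(s) - \varphi(0)}{s} \quad (s > 0), \qquad \psi(0) := \varphi'(0),
\]
so that the decomposition $\varphi(s) = s\psi(s) + \varphi(0)$ holds tautologically on $(0, \infty)$ and trivially at $s = 0$. The derivative $\varphi'(0)$ is well-defined thanks to the $C^2$ regularity in (P1).

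Strict monotonicity on $(0, \infty)$ follows from the standard chord-slope property of strictly convex functions. For $0 < s_1 < s_2$, I would write $s_1 = (s_1/s_2)\,s_2 + (1 - s_1/s_2)\,0$ and apply strict convexity from (P3) to obtain
\[
\varphi(s_1) < \frac{s_1}{s_2}\,\varphi(s_2) + \left(1 - \frac{s_1}{s_2}\right)\varphi(0),
\]
which rearranges directly to $\psi(s_1) < \psi(s_2)$. To extend the strict increase down to the boundary, I would invoke the strict tangent-line inequality for strictly convex differentiable functions, namely $\varphi(s) > \varphi(0) + \varphi'(0)\,s$ for all $s > 0$; this gives $\psi(s) > \varphi'(0) = \psi(0)$.

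The divergence $\lim_{s \to \infty} \psi(s) = \infty$ is then an immediate consequence of the superlinearity assumption (P4), since $\psi(s) = \varphi(s)/s - \varphi(0)/s$ and the second term vanishes as $s \to \infty$.

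The argument is essentially routine; the only mildly delicate point is ensuring that strict monotonicity survives across the endpoint $s = 0$. This requires both the existence of $\varphi'(0)$ (supplied by (P1)) and the strict version of the chord-tangent inequality (supplied by (P3)), which together force $\psi(0)$ to lie strictly below every chord slope $\psi(s)$ with $s > 0$.
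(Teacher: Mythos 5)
Your proposal is correct and follows essentially the same route as the paper: the same chord-slope function $\psi(s) = (\varphi(s)-\varphi(0))/s$ with $\psi(0)=\varphi'(0)$, strict monotonicity from strict convexity of $\varphi$, and divergence from superlinearity. Your treatment of the endpoint $s=0$ via the strict tangent-line inequality is in fact slightly more careful than the paper's brief ``as $s\to 0$'' remark.
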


    \begin{proof}[Proof]
        We define $\psi$ as
        \begin{align}
            \psi(s) 
            := 
            \begin{dcases}
                \dfrac{\varphi(s) - \varphi(0)}{s} & s \in (0, \infty), \\
                \varphi'(s) & s = 0. 
            \end{dcases}
            \notag
        \end{align}
        This $\psi$ is strictly increasing. 
        Indeed, 
        for all $0 < s < t < \infty$, 
        \begin{align}
            \dfrac{\varphi(s) - \varphi(0)}{s} 
            < \dfrac{\varphi(t) - \varphi(0)}{t}
            \notag
        \end{align}
        holds since $\varphi$ is strictly convex. 
        As $s \to 0$, 
        we found that $\psi$ is strictly increasing on $[0, \infty)$.
        It follows that $\displaystyle\lim_{s \to \infty} \psi(s) = \infty$ 
        due to the superlinearity of $\varphi$. 
        By direct calculations, 
        the equality
        \begin{align}
            \varphi(s) 
            = s \dfrac{\varphi(s) - \varphi(0)}{s} + \varphi(0)
            \equiv s \psi(s) + \varphi(0)
            \notag
        \end{align}
        holds.
    \end{proof}




    The following four lemmas are stated only in terms of their claims.

    \begin{lemma}[Dunford--Pettis Theorem~{\cite[Theorem~4.30.]{B2011}}]
        For arbitrary bounded sequence 
        $(f_n)_{n \in \mathbb{N}} \subset L^1$, 
        the following two properties are equivalent:
        \begin{enumerate}
            \item[(i)] 
            $(f_n)_{n \in \mathbb{N}}$ is uniformly integrable, 
            that is, 
            for all $\varepsilon > 0$, 
            there exists $\delta > 0$ such that for any measurable set $E$,
            if $\gamma(E) < \delta$, 
            then $\int_E |f_n| d\gamma < \varepsilon$ for all $n \in \mathbb{N}$.  
            Or the same thing, 
            \begin{align}
                \lim_{K \to \infty} \sup_{n \in \mathbb{N}} \int_{|f_n| \geq K} |f_n| ~d\gamma = 0. 
                \notag
            \end{align}

            \item[(ii)]
            Every subsequence of $(f_n)_{n \in \mathbb{N}}$ has a subsequence 
            that converges weakly in $L^1$. 
        \end{enumerate}
    \end{lemma}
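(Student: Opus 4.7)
The plan is to establish the equivalence in both directions, which require markedly different techniques.

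For the implication (ii) $\Rightarrow$ (i), I would argue by contrapositive. Suppose $(f_n)$ is not uniformly integrable; then there exist $\varepsilon > 0$, a subsequence (relabeled $f_n$), and measurable sets $E_n$ with $\gamma(E_n) \to 0$ but $\int_{E_n} |f_n|\, d\gamma \geq \varepsilon$. Assume for contradiction that a subsubsequence $f_{n_k}$ converges weakly in $L^1$ to some $f$. Since $f \in L^1$, the measure $A \mapsto \int_A |f|\, d\gamma$ is absolutely continuous with respect to $\gamma$, so $\int_{E_{n_k}} |f|\, d\gamma \to 0$. The remaining step is to contradict weak convergence by testing against suitable bounded functions concentrated on $E_{n_k}$, separating the positive and negative parts of $f_{n_k}$ via $\mathrm{sgn}(f_{n_k})\mathbf{1}_{E_{n_k}}$ together with an Egorov-type extraction to pass from sequence-dependent test functions to a single bounded one.

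For the substantive direction (i) $\Rightarrow$ (ii), I would first invoke de la Vall\'ee-Poussin's characterization to obtain a superlinear convex $\Psi:[0,\infty) \to [0,\infty)$ with $\Psi(t)/t \to \infty$ and $\sup_n \int \Psi(|f_n|)\, d\gamma < \infty$. Then, for each $K > 0$, the truncation $f_n^K := f_n \mathbf{1}_{|f_n| \leq K}$ is bounded in $L^\infty(\gamma)$; by Banach--Alaoglu applied inside $L^2$ on balls of $\mathbb{R}^d$ together with an exhaustion, extract a subsequence converging weakly in $L^1$ against bounded compactly supported test functions. A diagonal argument over $K \to \infty$, combined with the uniform integrability estimate $\sup_n \|f_n - f_n^K\|_{L^1(\gamma)} \to 0$ as $K \to \infty$, identifies a single $L^1$-weak limit $f$ of the original sequence, and a density argument extends the weak convergence from compactly supported bounded test functions to all of $L^\infty(\gamma)$.

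The main obstacle is the lack of reflexivity of $L^1(\gamma)$, which blocks any direct compactness extraction from boundedness alone; the role of uniform integrability is precisely to restore weak sequential compactness. The delicate part of the argument is weaving together the weak limits produced by truncation and diagonalization so that the truncation error vanishes in a compatible way across levels, ensuring that the limit $f$ is independent of the extraction scheme and that the test-function class can be enlarged from bounded-and-compactly-supported to all of $L^\infty(\gamma)$.
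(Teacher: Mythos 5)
First, note that the paper does not prove this lemma at all: it is one of the four results ``stated only in terms of their claims'' and is quoted directly from Brezis~\cite[Theorem~4.30]{B2011}, so there is no in-paper proof to compare against. Judged on its own, your sketch of the direction (i) $\Rightarrow$ (ii) is a standard and workable route: truncate at level $K$, use that $f_n\mathbf{1}_{\{|f_n|\le K\}}$ is bounded in $L^2(\gamma)$ (legitimate here since $\gamma$ is finite), extract weak limits $g_K$ by Hilbert-space weak compactness, show $(g_K)_K$ is Cauchy in $L^1(\gamma)$ via weak lower semicontinuity of the norm together with $\sup_n\|f_n-f_n^K\|_{L^1(\gamma)}\to 0$, and conclude by a three-term splitting against a fixed test function in $L^\infty(\gamma)$. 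The invocation of de la Vall\'ee-Poussin is harmless but unnecessary, since uniform integrability already gives the truncation estimate you actually use.

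The direction (ii) $\Rightarrow$ (i) as you describe it has a genuine gap. The crux is to show that a weakly convergent sequence in $L^1(\gamma)$ is uniformly integrable, and your plan --- test against $\mathrm{sgn}(f_{n_k})\mathbf{1}_{E_{n_k}}$ and then use ``an Egorov-type extraction to pass from sequence-dependent test functions to a single bounded one'' --- does not close this. Weak convergence only controls pairings with a \emph{fixed} element of $L^\infty(\gamma)$, while your test functions move with $k$; Egorov's theorem concerns almost-everywhere convergence of a sequence of functions and provides no mechanism for replacing these moving test functions by a single one (there is no a.e.\ convergent sequence in play). The standard ways to supply the missing uniformity are the Vitali--Hahn--Saks/Nikod\'ym theorem applied to the set functions $\nu_k(A)=\int_A f_{n_k}\,d\gamma$ (which converge for every measurable $A$ by weak convergence, hence are uniformly absolutely continuous with respect to $\gamma$), or an explicit gliding-hump construction. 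Without one of these ingredients the contradiction you aim for cannot be reached, so this half of the equivalence remains unproved in your proposal.
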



    \begin{lemma}[Duality formula for $\mathcal{E}_{\varphi, \lambda, \tau}$~{\cite[Lemma~9.4.4.]{AGS2008}}]
        For any $\mu \in \mathcal{P}$, 
        we have 
        \begin{align}
            \mathcal{E}_{\varphi, \lambda, \tau}(\mu) 
            &= \sup_{S^* \in C_b} \left\{ 
                \int_{\mathbb{R}^d} S^*(x) ~d\mu(x) 
                - \int_{\mathbb{R}^d} \varphi^*(S^*(x)) ~d\gamma(x)
            \right\}, 
            \notag
        \end{align}
        where $\varphi^*: [0, \infty) \to [0, \infty)$ is Legendre conjugate function of $\varphi$, 
        that is, 
        \begin{align}
            \varphi^* (s^*) 
            := \sup_{s \geq 0} \left\{ s \cdot s^* - \varphi(s) \right\}. 
            \notag
        \end{align}
    \end{lemma}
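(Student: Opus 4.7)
The plan is to identify $\mathcal{E}_{\varphi,\lambda,\tau}(\mu)$ with a Csisz\'{a}r $\varphi$-divergence relative to the reference measure $\gamma = e^{-V_{\lambda}/\tau}\mathcal{L}^d$. When $\mu = u\,\mathcal{L}^d$, the density of $\mu$ with respect to $\gamma$ is $f := u\,e^{V_{\lambda}/\tau}$, and a direct substitution gives
\begin{align}
\mathcal{E}_{\varphi,\lambda,\tau}(\mu) = \int_{\mathbb{R}^d} \varphi(f(x))\,d\gamma(x). \notag
\end{align}
For $\mu \not\ll \gamma$ (equivalently $\mu \not\ll \mathcal{L}^d$, since $\gamma$ and $\mathcal{L}^d$ are mutually equivalent), we set $\mathcal{E}_{\varphi,\lambda,\tau}(\mu) := +\infty$ in accordance with the standard convention for $\varphi$-divergences. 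Under this reformulation the claim reduces to the classical duality formula for Csisz\'{a}r divergences against a finite reference measure, so my proof will follow the standard Fenchel--Rockafellar scheme.

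The inequality $\mathcal{E}_{\varphi,\lambda,\tau}(\mu) \ge \sup$ is the easy half. For any $S^* \in C_b$, the Fenchel--Young inequality applied pointwise gives
\begin{align}
S^*(x)\,f(x) \le \varphi^*(S^*(x)) + \varphi(f(x)), \notag
\end{align}
and integration against $\gamma$, combined with the identity $\int S^* f\,d\gamma = \int S^*\,d\mu$, yields the desired bound. If $\mu$ has a nontrivial singular part with respect to $\gamma$, one applies the same inequality on the absolutely continuous component while noting that $S^* \in C_b$ keeps the contribution of the singular part finite, giving the same conclusion since we have set $\mathcal{E}_{\varphi,\lambda,\tau}(\mu) = +\infty$ in that case.

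For the reverse inequality the natural candidate is $S^* = \varphi'(f)$, which saturates the Fenchel--Young inequality pointwise. Since $\varphi'(f)$ is typically neither bounded nor continuous, I will regularize it in two stages: first truncate $f$ to $f_n := (f \wedge n) \vee n^{-1}$ so that $\varphi'(f_n)$ becomes bounded; then convolve with a standard mollifier to produce $S^*_n \in C_b$. Passing to the limit one aims to show
\begin{align}
\int_{\mathbb{R}^d} S^*_n \, d\mu - \int_{\mathbb{R}^d} \varphi^*(S^*_n)\, d\gamma \longrightarrow \int_{\mathbb{R}^d} \varphi(f)\, d\gamma. \notag
\end{align}
The superlinearity assumption (P4), together with the finiteness of $\gamma$ from the preceding lemma, lets one apply dominated/monotone convergence and rule out escape of mass at infinity. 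The case $\mathcal{E}_{\varphi,\lambda,\tau}(\mu) = +\infty$ is handled separately by exhibiting a diverging sequence in $C_b$: either approximating indicators supported near the singular mass of $\mu$, or truncations of $\varphi'(f)$ from below that force the pairing to grow without bound.

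The main obstacle will be the two-stage regularization in the $\le$ direction: truncating and mollifying $\varphi'(f)$ while preserving the near-optimality of the Young pairing in the limit. The delicate point is that mollification perturbs $\varphi^*(S^*_n)$ nonlinearly, so one must invoke the convexity of $\varphi^*$ together with the explicit growth control supplied by (P4) to show the resulting error vanishes. Once this convergence is secured the duality formula follows, and the whole argument can be absorbed into a reference to \cite[Lemma~9.4.4]{AGS2008}, whose hypotheses are met by our choice of $\varphi$ and $\gamma$.
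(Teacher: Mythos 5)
The paper does not actually prove this lemma: it is one of the four results explicitly ``stated only in terms of their claims'' and imported wholesale from \cite[Lemma~9.4.4]{AGS2008}, so there is no in-paper argument to compare against. Your sketch reconstructs the standard Fenchel-duality proof behind that citation and is essentially correct: rewriting $\mathcal{E}_{\varphi,\lambda,\tau}(\mu)=\int\varphi(f)\,d\gamma$ with $f=d\mu/d\gamma=u e^{V_\lambda/\tau}$, getting the inequality $\sup\le\mathcal{E}$ from pointwise Fenchel--Young, and getting $\sup\ge\mathcal{E}$ by testing with truncations of $\varphi'(f)$, which saturate Fenchel--Young, is exactly the right scheme. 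One remark: the step you flag as the main obstacle is easier than you anticipate. After truncating, $S^*_n=\varphi'(f_n)$ is uniformly bounded (by continuity of $\varphi'$ on $[0,n]$, granted by (P1); the lower cutoff at $n^{-1}$ is not even needed), so the passage from bounded measurable to $C_b$ test functions follows from Lusin-type approximation plus dominated convergence, using only that $\varphi^*$ is continuous hence bounded on the compact range of $S^*_n$ and that both $\mu$ and $\gamma$ are finite; no convexity of $\varphi^*$ or growth estimate is required there. The superlinearity (P4) enters elsewhere: it is what makes $\varphi^*$ finite-valued on all of $\mathbb{R}$ (so the dual integrand is well defined) and what forces the supremum to equal $+\infty$ on any singular part of $\mu$, consistent with the convention $\mathcal{E}_{\varphi,\lambda,\tau}(\mu)=+\infty$ for $\mu\not\ll\mathcal{L}^d$ that you correctly adopt.
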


    \begin{lemma}[Bakry--\'{E}mery criterion
        ~\cite{BE1985},{\cite[Corollary~9.]{Cha2004}}]
        \label{lem:BE}
        If $V_0 \in C^1(\mathbb{R}^d)$ satisfies 
        $\nabla^2 V_0 \geq \rho$ for some $\rho > 0$, 
        then $e^{- V_0}\mathcal{L}^d$ satisfies the $\varphi$-Sobolev inequality
        with a constant $\dfrac{1}{2\rho}$, 
        that is, 
        \begin{align}
            \int_{\mathbb{R}^d} \varphi(\zeta) e^{- V_0} ~dx 
            \leq \dfrac{1}{2\rho} 
            \int_{\mathbb{R}^d} \varphi''(\zeta) |\nabla \zeta|^2 e^{- V_0} ~dx
            \notag
        \end{align}
        for all $\zeta \in H^1(\gamma)$. 
    \end{lemma}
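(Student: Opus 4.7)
The plan is to adapt the Bakry--Émery semigroup method to $\varphi$-entropies, following the treatment in \cite{Cha2004}. Writing $\mu_0 := e^{-V_0}\mathcal{L}^d$ and $L := \triangle - \nabla V_0 \cdot \nabla$, I would work with the symmetric Markov semigroup $(P_t)_{t \geq 0}$ on $L^2(\mu_0)$ generated by $L$, for which $\mu_0$ is reversible. Because $V_0$ is $\rho$-convex, $\mu_0$ is finite (Gaussian tails) and $(P_t)$ is ergodic. The central quantities along the flow are the $\varphi$-entropy
$$
\Phi(t) := \int_{\mathbb{R}^d} \varphi(P_t \zeta)\, d\mu_0
$$
and its dissipation
$$
I(t) := \int_{\mathbb{R}^d} \varphi''(P_t \zeta)\, |\nabla P_t \zeta|^2\, d\mu_0.
$$

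The first step is the identity $\Phi'(t) = -I(t)$, which follows by differentiating under the integral, using $\partial_t P_t \zeta = L P_t \zeta$, and integrating by parts against $\mu_0$: since $L$ is $\mu_0$-symmetric one obtains $\int \varphi'(f)\, Lf\, d\mu_0 = -\int \varphi''(f)\, |\nabla f|^2\, d\mu_0$, after a smooth-truncation/approximation argument to legitimize this for $\zeta \in H^1(\gamma)$. The second and decisive step is the differential inequality
$$
I'(t) \leq -2\rho\, I(t).
$$
To establish this, I would invoke the Bakry $\Gamma$-calculus: with $\Gamma(f) := |\nabla f|^2$ and $\Gamma_2(f) := \tfrac{1}{2} L\Gamma(f) - \nabla f \cdot \nabla(Lf) = \|\nabla^2 f\|_{\mathrm{HS}}^2 + \langle \nabla^2 V_0\, \nabla f, \nabla f \rangle$, the curvature hypothesis gives the pointwise bound $\Gamma_2(f) \geq \rho\, \Gamma(f)$. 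Pairing this with a pointwise manipulation of $L\bigl(\varphi''(f)\, |\nabla f|^2\bigr)$ and careful use of $\varphi', \varphi'', \varphi'''$ then yields the claimed monotonicity.

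Once $I'(t) \leq -2\rho\, I(t)$ is available, Gronwall gives $I(t) \leq e^{-2\rho t} I(0)$, and integrating $-\Phi' = I$ over $[0, \infty)$ yields
$$
\Phi(0) - \lim_{t \to \infty}\Phi(t) \leq \frac{1}{2\rho}\, I(0).
$$
Ergodicity of $(P_t)$ together with $\varphi(1) = 0$ from Assumption~\ref{ass:ass}~(P2), and the normalization encountered in the applications (where $\zeta$ plays the role of a density with respect to $\gamma$, so $P_t \zeta \to 1$), force $\lim_{t\to\infty} \Phi(t) = 0$, producing the stated $\varphi$-Sobolev inequality.

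The main obstacle is the differential inequality $I'(t) \leq -2\rho\, I(t)$, which is \emph{not} automatic for every strictly convex $\varphi$: Chafaï's derivation relies on the admissibility condition that $1/\varphi''$ be concave on $(0, \infty)$, equivalent to a pointwise inequality between $\varphi''$, $\varphi'''$ and $\varphi''''$. For the canonical examples $\varphi_{1, \tau}$ and $\varphi_{q, \tau}$ with $q \in (1, 2]$ listed just after Assumption~\ref{ass:ass}, this condition can be verified by direct calculation; for a general $\varphi$ satisfying only (P1)--(P4) it cannot be expected to hold without further structure. Hence any self-contained proof would either have to strengthen the assumptions on $\varphi$ or, as the paper does, cite Chafaï's result as a black box under its original hypotheses.
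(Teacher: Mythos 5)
The paper does not prove this lemma at all: it is one of the ``four lemmas \ldots stated only in terms of their claims'' and is quoted as a black box from \cite{BE1985} and \cite[Corollary~9]{Cha2004}. So there is no in-paper argument to compare against; what you have written is the standard Bakry--\'Emery semigroup proof (entropy dissipation $\Phi'=-I$, the $\Gamma_2$ curvature bound $\Gamma_2(f)\geq\rho\,\Gamma(f)$ from $\nabla^2V_0\succeq\rho I_d$, exponential decay of $I$, integration in $t$), which is exactly the route taken in the cited reference. Your outline of these steps is correct in structure, including the observation that the left-hand side as stated requires the normalization $\int\zeta\,d\mu_0=1$ together with $\varphi(1)=0$ to kill the limit $\lim_{t\to\infty}\Phi(t)=\varphi\bigl(\int\zeta\,d\mu_0\bigr)$.

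Your closing caveat is the substantive point, and it is right: the second-order step $I'(t)\leq-2\rho\,I(t)$ is not a consequence of strict convexity and superlinearity alone. Chafa\"i's Corollary~9 is proved for $\varphi$ in an admissible class (essentially $\varphi$ convex with $1/\varphi''$ concave on $(0,\infty)$, which controls the sign of the terms involving $\varphi'''$ and $\varphi''''$ that appear when one differentiates $I$), and Assumption~\ref{ass:ass}~(P1)--(P4) does not imply this. The examples $\varphi_{1,\tau}$ and $\varphi_{q,\tau}$ with $q\in(1,2]$ do satisfy it, so the paper's applications are safe, but the lemma as stated for a general $\varphi$ satisfying only (P1)--(P4) overreaches the cited result. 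This is a gap in the paper's hypotheses rather than in your argument; a self-contained proof along your lines would have to add the admissibility condition to Assumption~\ref{ass:ass} (or restrict to the listed examples), and you are correct to say so.
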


    \begin{lemma}[Holley--Stroock criterion
        ~\cite{HS1987},{\cite[Proposition~17.]{Cha2004}}] 
        \label{lem:HS}
        If $\psi \in L^{\infty}$ 
        and a probability measure $\nu$ satisfies the $\varphi$-Sobolev inequality, 
        with a constant $\rho_0 > 0$, 
        then $e^{-\psi}\nu$ also satisfies the $\varphi$-Sobolev inequality
        with a constant $e^{2\|\psi\|_{L^{\infty}}} \rho_0$, 
        that is, 
        \begin{align}
            \int_{\mathbb{R}^d} \varphi(\zeta) e^{-\psi} ~d\nu(x) 
            \leq e^{- 2\|\psi\|_{L^{\infty}}} \dfrac{1}{2 \rho_0} 
            \int_{\mathbb{R}^d} \varphi''(\zeta) |\nabla \zeta|^2 e^{-\psi} ~d\nu(x). 
            \notag
        \end{align}
    \end{lemma}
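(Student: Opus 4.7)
The plan is to establish the perturbed $\varphi$-Sobolev inequality for the measure $e^{-\psi}\nu$ directly from the one already assumed for $\nu$, by inserting the pointwise two-sided bound $e^{-\|\psi\|_{L^\infty}} \leq e^{-\psi(x)} \leq e^{\|\psi\|_{L^\infty}}$, which holds $\nu$-a.e.\ because $\psi \in L^\infty$. The algebraic key is that assumption (P3) forces both $\varphi(\zeta)\geq 0$ and, from strict convexity, $\varphi''(\zeta)\geq 0$, so the integrands on both sides of the target inequality are non-negative; this lets us convert the pointwise bounds into integral bounds in whichever direction we need.

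Concretely, I would proceed in a three-step chain. First, using $e^{-\psi}\leq e^{\|\psi\|_{L^\infty}}$ pointwise, estimate
\begin{align*}
    \int_{\mathbb{R}^d} \varphi(\zeta) e^{-\psi} \, d\nu \leq e^{\|\psi\|_{L^\infty}} \int_{\mathbb{R}^d} \varphi(\zeta) \, d\nu.
\end{align*}
Second, apply the hypothesized $\varphi$-Sobolev inequality for $\nu$ with constant $\rho_0$ to bound the right-hand side by $(2\rho_0)^{-1}\int_{\mathbb{R}^d} \varphi''(\zeta)|\nabla\zeta|^2 \, d\nu$. Third, reintroduce the exponential weight by using the trivial inequality $1 \leq e^{\|\psi\|_{L^\infty}} e^{-\psi(x)}$ to obtain
\begin{align*}
    \int_{\mathbb{R}^d} \varphi''(\zeta)|\nabla\zeta|^2 \, d\nu \leq e^{\|\psi\|_{L^\infty}} \int_{\mathbb{R}^d} \varphi''(\zeta)|\nabla\zeta|^2 e^{-\psi} \, d\nu.
\end{align*}
Multiplying the accumulated prefactors yields the stated inequality with the overall constant $e^{2\|\psi\|_{L^\infty}}/(2\rho_0)$, which corresponds to the new $\varphi$-Sobolev constant $e^{-2\|\psi\|_{L^\infty}}\rho_0$ degrading the original $\rho_0$ by the factor $e^{-2\|\psi\|_{L^\infty}}$ as expected in the Holley--Stroock framework.

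The main difficulty here is essentially orientational bookkeeping — each pointwise bound must be oriented in the direction that exploits non-negativity of the integrand, so that neither the upper bound on the left nor the lower bound on the right runs the wrong way. There is no genuine analytic obstacle, since $\varphi''(\zeta)$ never interacts with $\psi$; the exponential weights commute past the integrand entirely, and one need not invoke any variational or semigroup representation of entropy. A sharper version would replace $2\|\psi\|_{L^\infty}$ by the oscillation $\sup\psi - \inf\psi$ by centering $\psi\mapsto\psi - \tfrac{1}{2}(\sup\psi + \inf\psi)$ before the argument (the weight $e^{-\psi}$ absorbs the constant shift into the normalization), but the coarser form above is already adequate for the application envisaged in Theorem~\ref{thm:main}.
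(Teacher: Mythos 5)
The paper does not actually prove this lemma---it is one of the four results ``stated only in terms of their claims'' and outsourced to \cite{Cha2004}---so the only in-house comparison point is the proof of Lemma~\ref{lem:PS}, where the author carries out precisely your three-step sandwich (bound $e^{-\psi}$ above, apply the unperturbed inequality, bound $1\leq e^{\|\psi\|_{L^\infty}}e^{-\psi}$ to restore the weight). In that sense your route coincides with how the perturbation is actually used in the paper. Two things are worth flagging. First, your computation yields the prefactor $e^{+2\|\psi\|_{L^\infty}}\frac{1}{2\rho_0}$, i.e.\ a \emph{degraded} constant $e^{-2\|\psi\|_{L^\infty}}\rho_0$; the displayed inequality in the lemma has $e^{-2\|\psi\|_{L^\infty}}\frac{1}{2\rho_0}$, which would mean a perturbation improves the constant. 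Your sign is the correct one, and it is the one consistent with the factor $e^{2M/\tau}$ appearing in Lemma~\ref{lem:PS} and in $\Lambda$; the statement as printed is a typo.

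Second, and more substantively: the step ``apply the hypothesized $\varphi$-Sobolev inequality for $\nu$ to the same $\zeta$'' hides the entire content of the Holley--Stroock lemma. Read literally, with no constraint on $\zeta$, the hypothesis is vacuous---taking $\zeta\equiv c$ with $c\neq 1$ gives $\varphi(c)>0$ on the left and $0$ on the right---so the inequality for $\nu$ can only be meant for $\zeta$ normalized against $\nu$ (the left side is really the $\Phi$-entropy $\int\varphi(\zeta)\,d\nu-\nu(\mathbb{R}^d)\,\varphi\bigl(\nu(\mathbb{R}^d)^{-1}\int\zeta\,d\nu\bigr)$, which reduces to $\int\varphi(\zeta)\,d\nu$ only when the mean is $1$). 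A $\zeta$ normalized with respect to $e^{-\psi}\nu$ is not normalized with respect to $\nu$, so your step two does not apply as stated. The standard repair---and the route \cite[Proposition~17]{Cha2004} actually takes---is the variational representation
\begin{align}
    \int_{\mathbb{R}^d}\varphi(\zeta)\,d\nu-\nu(\mathbb{R}^d)\,\varphi\!\left(\tfrac{1}{\nu(\mathbb{R}^d)}\textstyle\int\zeta\,d\nu\right)
    = \inf_{t>0}\int_{\mathbb{R}^d}\bigl[\varphi(\zeta)-\varphi(t)-\varphi'(t)(\zeta-t)\bigr]\,d\nu,
    \notag
\end{align}
whose integrand is the pointwise non-negative Bregman divergence of $\varphi$; one sandwiches $e^{-\psi}$ at the level of that integrand, and the infimum over $t$ absorbs the change of normalization. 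Your closing remark that ``there is no genuine analytic obstacle'' is therefore too quick: either you restrict to the unnormalized form (in which case you should say why the hypothesis is non-vacuous for the $\zeta$ you need, as in Lemma~\ref{lem:PS} where $\int w\,d\gamma=1$), or you must route the argument through the Bregman representation. Your suggested sharpening to the oscillation $\sup\psi-\inf\psi$ is correct but orthogonal to this issue.
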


    Based on Lemma~\ref{lem:BE} and Lemma~\ref{lem:HS}, 
    we prove the $\varphi$-Sobolev inequality for the target functional $\mathcal{E}_{\varphi, \lambda, \tau}$. 

    \begin{lemma}[$\varphi$-Sobolev inequality for $\mathcal{E}_{\varphi, \lambda, \tau}$] \label{lem:PS}
        For all $\mu \in \mathcal{P}_2^{\mathrm{ac}}$, 
        \begin{align}
            \mathcal{E}_{\varphi, \lambda, \tau}(\mu) 
            = \int_{\mathbb{R}^d} \varphi(w) e^{- \frac{V_{\lambda}}{\tau}} ~dx 
            \leq e^{\frac{2M}{\tau}} \dfrac{\tau}{2\lambda} \int_{\mathbb{R}^d} \varphi''(w) |\nabla w|^2 
            e^{- \frac{V_{\lambda}}{\tau}} ~dx,   
            \notag
        \end{align}
        where $\mu = we^{- \frac{V_{\lambda}}{\tau}} \mathcal{L}^d$. 
    \end{lemma}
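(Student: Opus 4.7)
The plan is to decompose the potential $V_{\lambda}/\tau$ into a strictly convex quadratic piece, to which the Bakry--\'{E}mery criterion (Lemma~\ref{lem:BE}) applies, plus a bounded perturbation, to which the Holley--Stroock criterion (Lemma~\ref{lem:HS}) applies. Concretely, I would set
\[
V_0(x) := \frac{\lambda}{2\tau}|x|^2, \qquad \psi(x) := \frac{1}{\tau}\mathbb{E}_{(Z,Y)\sim\mathcal{D}}\bigl[\ell(h_x(Z),Y)\bigr],
\]
so that $V_{\lambda}/\tau = V_0 + \psi$ and $\gamma = e^{-\psi}\bigl(e^{-V_0}\mathcal{L}^d\bigr)$.

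First I would apply Lemma~\ref{lem:BE} to this $V_0$. Since $\nabla^2 V_0 = (\lambda/\tau) I_d$, the Hessian lower bound is $\rho = \lambda/\tau$, and hence $e^{-V_0}\mathcal{L}^d$ satisfies the $\varphi$-Sobolev inequality with coefficient $\tau/(2\lambda)$. Next I would treat $\psi$ as the bounded perturbation required by Holley--Stroock: Lemma~\ref{lem:bdd} gives $\|\psi\|_{L^{\infty}} \leq M/\tau$, so Lemma~\ref{lem:HS} promotes the $\varphi$-Sobolev inequality from $e^{-V_0}\mathcal{L}^d$ to $\gamma$ at the cost of the multiplicative factor $e^{2M/\tau}$. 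The resulting coefficient is precisely $e^{2M/\tau}\cdot\tau/(2\lambda)$, and substituting $\zeta = w$ into this inequality yields the stated bound.

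I do not foresee any genuine obstacle, since the argument amounts to bookkeeping the constants from the two criteria already in hand. The only minor subtlety is that $e^{-V_0}\mathcal{L}^d$ is a rescaled Gaussian rather than a probability measure, whereas Lemma~\ref{lem:HS} is phrased for probability measures. This is harmless: both sides of the $\varphi$-Sobolev inequality are linear in the underlying measure, so one may normalize by the Gaussian mass $(2\pi\tau/\lambda)^{d/2}$ before invoking the perturbation result and undo the normalization afterwards without affecting the final constant.
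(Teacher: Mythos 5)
Your proof is correct and follows essentially the same route as the paper: apply the Bakry--\'{E}mery criterion to the Gaussian factor $e^{-\frac{\lambda}{2\tau}|x|^2}\mathcal{L}^d$ with $\rho = \lambda/\tau$, then absorb the bounded generalization-error term $\tau^{-1}\mathbb{E}_{(Z,Y)\sim\mathcal{D}}[\ell(h_x(Z),Y)]$ as a Holley--Stroock perturbation using Lemma~\ref{lem:bdd}, yielding the constant $e^{2M/\tau}\cdot\tau/(2\lambda)$. The only cosmetic difference is that the paper carries out the Holley--Stroock step by hand (bounding $e^{\pm\psi}$ by $e^{M/\tau}$ on each side of the inequality) rather than citing Lemma~\ref{lem:HS} as a black box, and your remark on normalizing the Gaussian mass is a correct and harmless refinement.
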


    \begin{proof}
        Let $\nu_2 := e^{- \frac{\lambda}{2\tau} |x|^2} \mathcal{L}^d$. 
        Since the inequality
        \begin{align}
            \nabla^2 {e^{- \frac{\lambda}{2\tau} |x|^2}} 
            \succeq \dfrac{\lambda}{\tau} I_d
            \notag
        \end{align}
        holds, 
        $\nu_2$ satisfies $\varphi$-Sobolev functional inequalities 
        with a constant $\dfrac{\tau}{2\lambda} > 0$ 
        due to Lemma~\ref{lem:BE}. 

        The remaining calculations 
        are analogous to the method used in 
        \cite[Proposition~17]{Cha2004}. 
        By the boundedness of the generalization term~(Lemma~\ref{lem:bdd}), 
        \begin{align}
            \mathcal{E}_{\varphi, \lambda, \tau}(\mu)
            &= \int_{\mathbb{R}^d} \varphi(w) e^{- \frac{V_{\lambda}}{\tau}} ~dx 
            \notag 
            \\
            &= \int_{\mathbb{R}^d} \varphi(w) 
            e^{- \frac{1}{\tau}\mathbb{E}_{(Z, Y) \sim \mathcal{D}} [\ell (h_{x}(Z), Y)]}
            e^{- \frac{\lambda}{2\tau} |x|^2} ~dx
            \notag 
            \\
            &\leq 
            \int_{\mathbb{R}^d} \varphi(w) e^{\frac{M}{\tau}} ~d\nu_2(x)
            \notag 
            \\
            &\leq e^{\frac{M}{\tau}} \dfrac{\tau}{2\lambda}
            \int_{\mathbb{R}^d} \varphi''(w) |\nabla w|^2 ~d\nu_2(x) 
            \notag 
            \\
            &= e^{\frac{M}{\tau}} \dfrac{\tau}{2\lambda}
            \int_{\mathbb{R}^d} \varphi''(w) |\nabla w|^2 
            e^{\frac{1}{\tau}\mathbb{E}_{(Z, Y) \sim \mathcal{D}} [\ell (h_{x}(Z), Y)]} 
            e^{- \frac{V_{\lambda}}{\tau}} ~dx 
            \notag 
            \\
            &\leq e^{\frac{2M}{\tau}} \dfrac{\tau}{2\lambda}
            \int_{\mathbb{R}^d} \varphi''(w) |\nabla w|^2 
            e^{- \frac{V_{\lambda}}{\tau}} ~dx,   
            \notag 
        \end{align}
        Thus, we obtain the desired estimate. 
    \end{proof}

    \section{Proof of the Main Theorem} \label{sec:proof}

    \subsection{Existence and Uniqueness of Minimizer}

    The proof is based on a direct method in the calculus of variations, 
    and which is analogous to the approach used in 
    \cite[Proposition~4.1]{JKO1998}. 
    
    By the positivity of $\varphi$ (which is assumed in (P3)), 
    $\mathcal{E}_{\varphi, \lambda, \tau} \geq 0$, 
    which indicates $\mathcal{E}_{\varphi, \lambda, \tau}$ is bounded below. 
    Let $(\mu_n)_{n \in \mathbb{N}} \subset \mathcal{P}_2^{\rm ac}$ be 
    a minimizing sequence of $\mathcal{E}_{\varphi, \lambda, \tau}$, 
    that is, $(\mu_n)_{n \in \mathbb{N}}$ satisfies
    \begin{align}
        \lim_{n \to \infty} \mathcal{E}_{\varphi, \lambda, \tau}(\mu_n) 
        = \inf_{\mu \in \mathcal{P}_2^{\rm ac}}\mathcal{E}_{\varphi, \lambda, \tau}(\mu). \;
        (\geq 0)
        \notag
    \end{align}
    Since $\mu_n \in \mathcal{P}_2^{\rm ac}$ for each $n \in \mathbb{N}$, 
    we can denote 
    $\mu_n = u_n \mathcal{L}^d$, 
    where $u_n \in L^1$ is the non-negative Radon--Nikodym derivative of $\mu_n$. 

    \paragraph{The lower semicontinuity of $\mathcal{E}_{\varphi, \lambda, \tau}$.}

    Fix a convergent sequence $(\mu_n)_{n \in \mathbb{N}} \subset \mathcal{P}_2^{\rm ac}$ arbitrary. 
    There exists a $\mu \in \mathcal{P}_2^{\rm ac}$, 
    such that $\displaystyle\lim_{n \to \infty} W_2(\mu_n, \mu) = 0$. 
    Applying a duality formula, 
    \begin{align}
        \liminf_{n \to \infty} \mathcal{E}_{\varphi, \lambda, \tau}(\mu_n) 
        &= \liminf_{n \to \infty}
        \left\{ 
            \sup_{S^* \in C_b} 
            \left( 
                \int_{\mathbb{R}^d} S^*(x) d\mu_n(x) 
                - \int_{\mathbb{R}^d} \varphi^*(S^*(x)) d\gamma(x) 
            \right)
        \right\}
        \notag 
        \\
        &\geq \liminf_{n \to \infty}
        \left( 
            \int_{\mathbb{R}^d} T^*(x) d\mu_n(x) 
            - \int_{\mathbb{R}^d} \varphi^*(T^*(x)) d\gamma(x) 
        \right)
        \notag 
        \\
        &= 
        \int_{\mathbb{R}^d} T^*(x) d\mu(x) 
        - \int_{\mathbb{R}^d} \varphi^*(T^*(x)) d\gamma(x), 
        \notag 
    \end{align}
    for all $T^* \in C_b$. 
    The last equality is due to the equivalence 
    between $W_2$ convergence and narrowly convergence
    ~({\cite[Proposition~7.1.5]{AGS2008}}). 
    By the arbitrariness of $T^*$, 
    \begin{align}
        \liminf_{n \to \infty} \mathcal{E}_{\varphi, \lambda, \tau}(\mu_n) 
        \geq \sup_{T^* \in C_b} 
        \left( 
            \int_{\mathbb{R}^d} T^*(x) d\mu(x) 
            - \int_{\mathbb{R}^d} \varphi^*(T^*(x)) d\gamma(x) 
        \right) 
        = \mathcal{E}_{\varphi, \lambda, \tau}(\mu). 
        \notag 
    \end{align}
    Thus, 
    $\mathcal{E}_{\varphi, \lambda, \tau}$ is lower semicontinuous. 

    \paragraph{Existence of a convergent subsequence of $(\mu_n)_{n \in \mathbb{N}}$. }


    The sequence $(u_n)_{n \in \mathbb{N}} \subset L^1$ is bounded, 
    because $\| u_n \|_{L^1} = 1$ for all $n \in \mathbb{N}$. 

    Fix $K > 0$ and $n \in \mathbb{N}$ arbitrary. 
    $(\mathcal{E}_{\varphi, \lambda, \tau}(\mu_n))_{n \in \mathbb{N}}$ is convergent, 
    which also implies that it is bounded, 
    that is, 
    \begin{align} \label{eq:unibdd}
        \sup_{n \in \mathbb{N}} \int_{\mathbb{R}^d} \varphi \left(u_n e^{\frac{V_{\lambda}}{\tau}} \right) e^{- \frac{V_{\lambda}}{\tau}}~dx 
        < \infty. 
    \end{align}

    By Lemma~\ref{lem:lindecomp}, 
    \begin{align}
        \label{eq:funaldecomp}
        \int_{\mathbb{R}^d} \varphi \left(u_n e^{\frac{V_{\lambda}}{\tau}} \right) e^{- \frac{V_{\lambda}}{\tau}}~dx 
        = \int_{\mathbb{R}^d} u_n \psi\left(u_n e^{\frac{V_{\lambda}}{\tau}} \right) ~dx 
        + \varphi(0) \int_{\mathbb{R}^d} d\gamma(x)
    \end{align}
    holds. 
    Combining with \eqref{eq:unibdd} and \eqref{eq:funaldecomp}, 
    \begin{align}
        \sup_{n \in \mathbb{N}} \int_{\mathbb{R}^d} u_n \psi \left(u_n e^{\frac{V_{\lambda}}{\tau}} \right) ~dx 
        < \infty. 
        \notag
    \end{align}
    As $\psi$ is monotone increasing, 
    we can calculate as
    \begin{align}
        \int_{\{u_n \geq K\}} u_n ~dx 
        &\leq \int_{\{u_n \geq K\}} u_n 
        \psi \left(u_n e^{\frac{V_{\lambda}}{\tau}} \right) 
        \psi \left(u_n e^{\frac{\inf V_{\lambda}}{\tau}} \right)^{-1} ~dx
        \notag 
        \\
        &\leq \psi \left(K e^{\frac{\inf V_{\lambda}}{\tau}} \right)^{-1} 
        \int_{\{u_n \geq K\}} u_n \psi \left(u_n e^{\frac{V_{\lambda}}{\tau}} \right) ~dx 
        \notag
        \\
        &\leq \psi \left(K e^{\frac{\inf V_{\lambda}}{\tau}} \right)^{-1}
        \sup_{n \in \mathbb{N}} \int_{\mathbb{R}^d} u_n \psi \left(u_n e^{\frac{V_{\lambda}}{\tau}} \right) ~dx, 
        \notag
    \end{align}
    which indicates
    \begin{align}
        \sup_{n \in \mathbb{N}} \int_{\{u_n \geq K\}} u_n ~dx 
        \leq \psi \left(K e^{\frac{\inf V_{\lambda}}{\tau}} \right)^{-1} 
        \sup_{n \in \mathbb{N}} \int_{\mathbb{R}^d} u_n \psi \left(u_n e^{\frac{V_{\lambda}}{\tau}} \right) ~dx. 
        \notag
    \end{align}
    Since 
    \begin{align}
        \lim_{K \to \infty} \psi \left(K e^{\frac{\inf V_{\lambda}}{\tau}} \right)^{-1} \to 0, 
        \notag
    \end{align}
    due to the monotone increasing property of $\psi$, 
    we can obtain the uniform integrability of $(u_n)_{n \in \mathbb{N}}$, that is, 
    \begin{align}
        \lim_{K \to \infty} \sup_{n \in \mathbb{N}} \int_{\{u_n \geq K\}} u_n ~dx 
        = 0. 
        \notag
    \end{align}

    Consequently, 
    there exists a weakly convergent subsequence 
    $(u_{n_k})_{k \in \mathbb{N}} \subset (u_n)_{n \in \mathbb{N}}$ in $L^1$, 
    by Dunford--Pettis theorem. 

    \paragraph{Existence of minimizer.}

    Let $u^*$ be the $L^1$-weak limit of $(u_{n_k})_{k \in \mathbb{N}}$, 
    and set $\mu_{\varphi, \lambda, \tau}^* = u^* \mathcal{L}^d$. 
    By the lower semicontinuity of $\mathcal{E}_{\varphi, \lambda, \tau}$, 
    \begin{align}
        \inf_{\mu \in \mathcal{P}_2^{\rm ac}}\mathcal{E}_{\varphi, \lambda, \tau}(\mu) 
        \leq \mathcal{E}_{\varphi, \lambda, \tau}(\mu_{\varphi, \lambda, \tau}^*)
        \leq \liminf_{k \to \infty} \mathcal{E}_{\varphi, \lambda, \tau}(\mu_{n_k})
        = \inf_{\mu \in \mathcal{P}_2^{\rm ac}}\mathcal{E}_{\varphi, \lambda, \tau}(\mu), 
        \notag
    \end{align}
    which indicates
    \begin{align}
        \mathcal{E}_{\varphi, \lambda, \tau}(\mu_{\varphi, \lambda, \tau}^*) 
        = \inf_{\mu \in \mathcal{P}_2^{\rm ac}}\mathcal{E}_{\varphi, \lambda, \tau}(\mu). 
        \notag
    \end{align}
    Thus, 
    $\mu_{\varphi, \lambda, \tau}^* 
    = \argmin_{\mu \in \mathcal{P}_2^{\rm ac}} \mathcal{E}_{\varphi, \lambda, \tau}(\mu)$. 

    \paragraph{Uniqueness of minimizer.}

    It is enough to show the strict convexity of $\mathcal{E}_{\varphi, \lambda, \tau}$. 
    Indeed, 
    for all $\mu_1, \mu_2 \in \mathcal{P}_2^{\rm ac}$, 
    let $u_1$ and $u_2$ denote the Radon--Nikodym derivatives, respectively. 
    Since $\varphi$ is strictly convex, 
    For all $t \in [0, 1]$, 
    \begin{align}
        \mathcal{E}_{\varphi, \lambda, \tau}((1 - t)\mu_1 + t \mu_2) 
        &= \int_{\mathbb{R}^d} \varphi \left(((1 - t)u_1 + tu_2) e^{\frac{V_{\lambda}}{\tau}} \right) e^{- \frac{V_{\lambda}}{\tau}}~dx 
        \notag 
        \\
        &= \int_{\mathbb{R}^d} \varphi \left( (1 - t)u_1 e^{\frac{V_{\lambda}}{\tau}} + tu_2 e^{\frac{V_{\lambda}}{\tau}} \right) e^{- \frac{V_{\lambda}}{\tau}}~dx 
        \notag 
        \\
        &< \int_{\mathbb{R}^d} \left\{ (1 - t)\varphi \left(u_1 e^{\frac{V_{\lambda}}{\tau}} \right) + t\varphi \left(u_2 e^{\frac{V_{\lambda}}{\tau}} \right) \right\} e^{- \frac{V_{\lambda}}{\tau}}~dx 
        \notag 
        \\
        &= (1 - t)\int_{\mathbb{R}^d} \varphi \left(u_1 e^{\frac{V_{\lambda}}{\tau}} \right) e^{- \frac{V_{\lambda}}{\tau}}~dx 
        + t\int_{\mathbb{R}^d} \varphi \left(u_2 e^{\frac{V_{\lambda}}{\tau}} \right) e^{- \frac{V_{\lambda}}{\tau}}~dx 
        \notag 
        \\
        &= (1 - t)\mathcal{E}_{\varphi, \lambda, \tau}(\mu_1) 
        + t\mathcal{E}_{\varphi, \lambda, \tau}(\mu_2).
        \notag
    \end{align}
    Thus, 
    the minimizer of $\mathcal{E}_{\varphi, \lambda, \tau}$ is unique. 


    \subsection{Exponential Decay Property of $\mathcal{E}_{\varphi, \lambda, \tau}$}



    By Lebesgue's dominant convergence theorem, 
    \begin{align}
        \dfrac{d}{dt} \mathcal{E}_{\varphi, \lambda, \tau}(\mu_t) 
        &= \int_{\mathbb{R}^d} 
        \partial_t \left( \varphi(w) e^{- \frac{V_{\lambda}}{\tau}} \right) ~dx
        \notag 
    \end{align}
    holds. 
    Since 
    $\varphi'(w) \in H^1(\gamma)$
    due to 
    $\varphi \in C^2$
    and $w \in L^{\infty}(\gamma)$, 

    \begin{align}
        \dfrac{d}{dt} \mathcal{E}_{\varphi, \lambda, \tau}(\mu_t) 
        &= \int_{\mathbb{R}^d} 
        \partial_t \left( \varphi(w) e^{- \frac{V_{\lambda}}{\tau}} \right) ~dx
        \notag 
        \\
        &= \int_{\mathbb{R}^d} \varphi'(w) \partial_t w 
        e^{- \frac{V_{\lambda}}{\tau}} ~dx 
        \notag
        \\
        &= - \int_{\mathbb{R}^d} \nabla (\varphi'(w)) 
        \cdot \nabla w
        e^{- \frac{V_{\lambda}}{\tau}} ~dx 
        \notag
        \\
        &= - \int_{\mathbb{R}^d} \varphi''(w) |\nabla w|^2 
        e^{- \frac{V_{\lambda}}{\tau}} ~dx.  
        \notag 
    \end{align}
    Combining Lemma~\ref{lem:PS} and the above calculation, 
    \begin{align}
        \dfrac{d}{dt} (\mathcal{E}_{\varphi, \lambda, \tau}(\mu_t) 
        - \mathcal{E}_{\varphi, \lambda, \tau}(\mu_{\varphi, \lambda, \tau}^*))
        &\leq - e^{-\frac{2M}{\tau}} \dfrac{2\lambda}{\tau} \mathcal{E}_{\varphi, \lambda, \tau}(\mu_t) 
        \notag 
        \\
        &\leq - e^{-\frac{2M}{\tau}} \dfrac{2\lambda}{\tau} (\mathcal{E}_{\varphi, \lambda, \tau}(\mu_t) 
        - \mathcal{E}_{\varphi, \lambda, \tau}(\mu_{\varphi, \lambda, \tau}^*))
        \notag 
    \end{align}
    due to the positivity of $\mathcal{E}_{\varphi, \lambda, \tau}$. 
    Hence, letting $\Lambda := 2\lambda \tau^{-1} e^{- 2M\tau^{-1}} > 0$,
    we obtain 
    \begin{align}
        \mathcal{E}_{\varphi, \lambda, \tau}(\mu_t) 
        - \mathcal{E}_{\varphi, \lambda, \tau}(\mu_{\varphi, \lambda, \tau}^*)
        \leq e^{- \Lambda t} (\mathcal{E}_{\varphi, \lambda, \tau}(\mu_0) 
        - \mathcal{E}_{\varphi, \lambda, \tau}(\mu_{\varphi, \lambda, \tau}^*)) 
        \notag
    \end{align}
    by Gronwall inequality. 
    This completes the proof of Theorem~\ref{thm:main}.

    \clearpage

    \renewcommand{\thesection}{\Alph{section}}
    \setcounter{section}{0}

    \section{Appendix}\label{sec:appen}



    \subsection{Solvability of Kolmogorov--Fokker--Planck Equation}


    \begin{lemma}[Hille--Yosida Theorem~{\cite[Theorem~7.4.]{B2011}}]
        Let $A$ be a maximal monotone operator on some Hilbelt space $H$. 
        Then, given any $w_0 \in D(A)$,  
        there exists a unique function 
        $w \in C^1([0, \infty); H) \cap C([0, \infty); D(A))$
        satisfying
        \begin{align}
            \begin{cases}
                \dfrac{dw}{dt} + A w = 0, & t \in [0, \infty), \\
                w(0) = w_0.  
            \end{cases}
            \notag
        \end{align}
    \end{lemma}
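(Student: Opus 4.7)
The plan is to follow the classical Yosida approximation approach: regularize the possibly unbounded operator $A$ by a family of bounded Lipschitz operators $A_\lambda$, solve the corresponding ODEs using ordinary bounded-operator theory, and then pass to the limit $\lambda \to 0$, using maximal monotonicity throughout to produce the uniform estimates and the eventual strong convergence.

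First, I would exploit maximal monotonicity of $A$ to define, for each $\lambda > 0$, the resolvent $J_\lambda := (I+\lambda A)^{-1}: H \to D(A)$ as a non-expansive map on $H$, and the Yosida approximation $A_\lambda := \lambda^{-1}(I - J_\lambda) = A \circ J_\lambda$ as a $(2/\lambda)$-Lipschitz monotone operator defined on all of $H$, with the consistency property $A_\lambda u \to A u$ as $\lambda \to 0$ for each $u \in D(A)$ and the pointwise bound $\|A_\lambda u\| \leq \|Au\|$. Since $A_\lambda$ is bounded, the approximate Cauchy problem $w_\lambda'(t) + A_\lambda w_\lambda(t) = 0$, $w_\lambda(0) = w_0 \in D(A)$ admits a unique global $C^1$ solution $w_\lambda(t) = e^{-t A_\lambda} w_0$ via the standard exponential formula. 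The two a priori bounds I would carry through the rest of the argument are $\|w_\lambda(t)\| \leq \|w_0\|$ and $\|A_\lambda w_\lambda(t)\| \leq \|A w_0\|$, both obtained from monotonicity of $A_\lambda$ together with the commutativity of $A_\lambda$ with the semigroup it generates.

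The heart of the argument is proving that $(w_\lambda)_{\lambda > 0}$ is Cauchy in $C([0,T];H)$ for every $T > 0$. I would compute $\tfrac{1}{2}\tfrac{d}{dt}\|w_\lambda - w_\mu\|^2 = -\langle A_\lambda w_\lambda - A_\mu w_\mu,\, w_\lambda - w_\mu\rangle$ and insert the decomposition $w_\lambda = J_\lambda w_\lambda + \lambda A_\lambda w_\lambda$ (and similarly for $\mu$) to split the right-hand side into the manifestly non-positive term $-\langle A J_\lambda w_\lambda - A J_\mu w_\mu,\, J_\lambda w_\lambda - J_\mu w_\mu\rangle$, which is $\leq 0$ by monotonicity of $A$, plus cross terms of order $(\lambda+\mu)\|Aw_0\|^2$. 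Integrating and applying Gronwall then yields a Cauchy estimate on every compact time interval. Setting $w := \lim_{\lambda \to 0} w_\lambda$ produces a function in $C([0,\infty); H)$; the bound $\|A_\lambda w_\lambda(t)\| \leq \|Aw_0\|$ combined with the closedness of $A$ (which follows from maximal monotonicity) shows $w(t) \in D(A)$ and $A_\lambda w_\lambda(t) \rightharpoonup Aw(t)$, from which the identity $w'(t) + Aw(t) = 0$ and the regularity $w \in C^1([0,\infty); H) \cap C([0,\infty); D(A))$ can be read off. Uniqueness is essentially free: any two solutions $w_1, w_2$ satisfy $\tfrac{d}{dt}\|w_1 - w_2\|^2 = -2\langle Aw_1 - Aw_2, w_1 - w_2\rangle \leq 0$ by monotonicity, forcing $w_1 \equiv w_2$. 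I expect the main obstacle to be the Cauchy estimate for the Yosida approximations, since it requires carefully tracking the interplay between the monotonicity of $A$ (not of $A_\lambda$) and the regularization error $\lambda A_\lambda w_\lambda$; once that bound is secured, the passage to the limit and the verification of regularity proceed along routine lines.
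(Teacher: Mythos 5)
The paper does not prove this lemma at all: it is quoted verbatim as Theorem~7.4 of Brezis \cite{B2011}, and your outline is precisely the classical Yosida-approximation proof given there (resolvent $J_\lambda$, bounded approximations $A_\lambda$, the a priori bounds $\|w_\lambda(t)\|\leq\|w_0\|$ and $\|A_\lambda w_\lambda(t)\|\leq\|Aw_0\|$, the Cauchy estimate via the splitting $w_\lambda=J_\lambda w_\lambda+\lambda A_\lambda w_\lambda$, and uniqueness by monotonicity), so it is correct and matches the cited source. The only step you compress too much is the final regularity claim: weak convergence of $A_\lambda w_\lambda(t)$ together with closedness of $A$ only yields that $w$ is Lipschitz with $w'+Aw=0$ almost everywhere, and to conclude $w\in C^1([0,\infty);H)\cap C([0,\infty);D(A))$ Brezis first proves the theorem for $w_0\in D(A^2)$ (where $dw_\lambda/dt$ is itself Cauchy) and then extends to $w_0\in D(A)$ by density and the contraction estimates; note also that the bound $\|w_\lambda(t)\|\leq\|w_0\|$ uses that $A$ is linear (so $A0=0$), which is the setting of the cited theorem.
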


    \paragraph{Definition of the operator}

    Let $\gamma := e^{-\frac{V_{\lambda}}{\tau}} \mathcal{L}^d$. 
    we define the operator 
    $\widetilde{L} : D(\widetilde{L}) \subset H^1(\gamma) \to H^{-1}(\gamma)$, 
    \begin{align}
        D(\widetilde{L}) 
        &:= H^1(\gamma), 
        \notag 
        \\
        _{H^{-1}(\gamma)}\langle \widetilde{L} w, v \rangle_{H^1(\gamma)}
        &:= \int_{\mathbb{R}^d} \nabla w \cdot \nabla v ~d\gamma(x), 
        \notag
    \end{align}
    for $w, v \in H^1(\gamma)$. 
    By the coercivity and the continuity of the bilinear form, 
    for all $\widetilde{L} w \in H^{-1}(\gamma)$, 
    there exists an unique $\varphi_w \in H^1(\gamma)$, 
    such that
    \begin{align}
        _{H^{-1}(\gamma)}\langle \widetilde{L} w, v \rangle_{H^1(\gamma)}
        = \langle \varphi_w, v \rangle_{H^1(\gamma)},  
        \notag
    \end{align}
    thanks to Lax--Milgram theorem.  
    We can identify $\widetilde{L} w \in H^{-1}(\gamma)$ with $\varphi_w \in H^1(\gamma)$, 
    and define $L : D(L) \subset H^1(\gamma) \to H^1(\gamma)$ as
    \begin{align}
        D(L) 
        &:= H^1(\gamma), 
        \notag 
        \\
        L w
        &:= \varphi_w, 
        \notag
    \end{align}
    for $w \in H^1(\gamma)$.

    \paragraph{Compactness of $L$}

    The operator $L$ is bounded and linear. 
    Indeed, 
    for all $w\in H^1(\gamma)$, 
    by H\"{o}lder's inequality, 
    \begin{align}
        \left\|L w \right\|_{H^1(\gamma)} 
        &= \sup_{\|v\|_{H^1(\gamma)} \leq 1} |\langle \varphi_w, v \rangle_{H^1(\gamma)}| 
        \notag 
        \\
        &= \sup_{\|v\|_{H^1(\gamma)} \leq 1} |_{H^{-1}(\gamma)}\langle \widetilde{L} w, v \rangle_{H^1(\gamma)}| 
        \notag 
        \\
        &\leq \sup_{\|v\|_{H^1(\gamma)} \leq 1} \int_{\mathbb{R}^d} |\nabla w| |\nabla v| ~d\gamma(x) 
        \notag 
        \\
        &\leq \sup_{\|v\|_{H^1(\gamma)} \leq 1} \left\| \nabla w \right\|_{L^2(\gamma)} \left\| \nabla v \right\|_{L^2(\gamma)} 
        \notag 
        \\
        &\leq \sup_{\|v\|_{H^1(\gamma)} \leq 1} \left\| w \right\|_{H^1(\gamma)} \left\| v \right\|_{H^1(\gamma)}. 
        \notag 
        \\
        &= \left\| w \right\|_{H^1(\gamma)}. 
        \notag
    \end{align}
    Moreover, 
    for arbitrary $(w_n)_{n \in \mathbb{N}} \subset B_1$, 
    where $B_1 := \left\{ w \in H^1(\gamma): \|w\|_{H^1(\gamma)} \leq 1 \right\}$, 
    the boundedness of $L$ implies that 
    the sequence $(L w_n)_{n \in \mathbb{N}} \subset \overline{L(B_1)}$ is also bounded. 
    By direct calculations, 
    \begin{align}
        \|\widetilde{L} w_n \|_{H^{-1}(\gamma)} 
        &= \sup_{\|v\|_{H^1(\gamma)} \leq 1} |_{H^{-1}(\gamma)}\langle \widetilde{L} w_n, v \rangle_{H^1(\gamma)}| 
        \notag 
        \\
        &= \sup_{\|v\|_{H^1(\gamma)} \leq 1} | \langle \varphi_{w_n}, v \rangle_{H^1(\gamma)}| 
        \notag 
        \\
        &\leq \|L w_n\|_{H^1(\gamma)} 
        \notag 
        \\ 
        &\leq \|w_n\|_{H^1(\gamma)} 
        \notag 
        \\ 
        &\leq 1,  
        \notag
    \end{align}
    hence $(\widetilde{L} w_n)_{n \in \mathbb{N}} \subset H^{-1}(\gamma)$ is also bounded.
    Due to Ascoli--Arzel\`{a} theorem,
    there exists a subsequence $(\widetilde{L} w_{n_k})_{k \in \mathbb{N}} 
    \subset (\widetilde{L} w_n)_{n \in \mathbb{N}}$
    and the limit $\psi \in H^{-1}(\gamma)$, 
    \begin{align}
        \lim_{k \to \infty} \| \widetilde{L} w_{n_k} - \psi \|_{H^{-1}(\gamma)} 
        = 0
        \notag
    \end{align}
    holds. 
    Thanks to Lax--Milgram theorem, 
    there exists an unique $\varphi_{w_{n_k}}$ and $\varphi \in H^1(\gamma)$, 
    respectively, 
    such that
    \begin{align}
        _{H^{-1}(\gamma)}\langle \widetilde{L} w_{n_k}, v \rangle_{H^1(\gamma)}
        = \langle \varphi_{w_{n_k}}, v \rangle_{H^1(\gamma)}, 
        \quad 
        _{H^{-1}(\gamma)}\langle \psi, v \rangle_{H^1(\gamma)}
        = \langle \varphi, v \rangle_{H^1(\gamma)}. 
        \notag
    \end{align}
    Hence, 
    $(L w_{n_k})_{k \in \mathbb{N}}$ is convergent 
    by the calculation
    \begin{align}
        \limsup_{k \to \infty} \| L w_{n_k} - \varphi \|_{H^1(\gamma)} 
        &\leq \limsup_{k \to \infty} \sup_{\|v\|_{H^1(\gamma)} \leq 1}
        | \langle \varphi_{w_{n_k}} \varphi, v \rangle_{H^1(\gamma)} | 
        \notag 
        \\
        &= \limsup_{k \to \infty} \sup_{\|v\|_{H^1(\gamma)} \leq 1}
        | _{H^{-1}(\gamma)}\langle \widetilde{L} w_{n_k} - \psi, v \rangle_{H^1(\gamma)} |
        \notag 
        \\
        &= \limsup_{k \to \infty} \|\widetilde{L} w_{n_k} - \psi \|_{H^{-1}(\gamma)} 
        \notag 
        \\
        &= 0. 
        \notag
    \end{align}
    Consequently, $L$ is a compact operator 
    because of the compactness of $\overline{L(B_1)}$.  

    \paragraph{Maximal monotonicity of $L$}

    For all $w \in H^1(\gamma)$, 
    \begin{align}
        \langle L w, w \rangle_{H^1(\gamma)} 
        = _{H^{-1}(\gamma)}\langle \widetilde{L} w, w \rangle_{H^1(\gamma)} 
        = \int_{\mathbb{R}^d} |\nabla w|^2 ~d\gamma(x) 
        \geq 0
        \notag 
    \end{align}
    holds, 
    and which indicates the monotonicity of $L$.

    We show the maximality, 
    that is, the equality $R(I - L) = H^1(\gamma)$. 
    For the compact operator $L$, 
    the injectivity and the surjectivity of the operator are equivalent due to Fredholm alternative. 
    It is sufficient to show that ${\rm Ker}(I - L) = \{0\}$. 
    For arbitrary $w \in {\rm Ker}(I - L)$ 
    and $v \in H^1(\gamma)$, 
    \begin{align}
        0 
        &= \langle (I - L)w, v \rangle_{H^1(\gamma)} 
        \notag 
        \\
        &= \langle w, v \rangle_{H^1(\gamma)} 
        + _{H^{-1}(\gamma)}\langle \widetilde{L} w, v \rangle_{H^1(\gamma)} 
        \notag 
        \\
        &= \langle w, v \rangle_{H^1(\gamma)} 
        + \int_{\mathbb{R}^d} \nabla w \cdot \nabla v ~d\gamma(x) 
        \notag 
    \end{align}
    holds. 
    Setting $w = v$, 
    \begin{align}
        0 = \| v \|_{H^1(\gamma)}^2 + \| \nabla v \|_{L^2(\gamma)}^2, 
        \notag
    \end{align}
    which indicates $v = 0$. 
    Hence, 
    ${\rm Ker}(I - L) = \{0\}$. 

    \paragraph{Existence of the $C^1([0, \infty); H^1(\gamma))$-solution of \eqref{eq:KFP}}

    By the maximal monotonicity of $L$, 
    We can apply Hille--Yosida theorem 
    by setting $A = L$, 
    and $H = D(A) = H^1(\gamma)$. 
    There exists $w \in C^1([0, \infty); H^1(\gamma))$ uniquely, 
    such that
    \begin{align}
        \begin{cases}
            \dfrac{dw}{dt} + L w = 0, & t \in [0, \infty), \\
            w(0) = w_0.  
        \end{cases}
        \notag
    \end{align}

    \paragraph{Check for properties (a), (b) and (c)}

    Concerning (a) and (c), 
    Let $(P_t)_{t \geq 0}$ be a $C_0$-contraction semigroup induced by $L$. 
    By the property of the $C_0$-contraction semigroup, 
    we obtain
    $\| w(t) \|_{L^{\infty}(\gamma)} 
    = \| P_t w_0 \|_{L^{\infty}(\gamma)} 
    \leq \|w_0\|_{L^{\infty}(\gamma)}$ 
    and $\|w(t)\|_{L^1(\gamma)} 
    = \|P_t w_0\|_{L^1(\gamma)} 
    = \|w_0\|_{L^1(\gamma)} 
    = 1$. 
    For some detailed property of $(P_t)_{t \geq 0}$, 
    please refer to \cite[Section~6.3.]{AS2007}. 

    Concerning (b), 
    since
    \begin{align}
        0 
        &= \left\langle \dfrac{d}{dt} w + L w, w^- \right\rangle_{L^2(\gamma)} 
        \notag 
        \\
        &= \left\langle \dfrac{d}{dt} w^+ + L w^+, w^- \right\rangle_{L^2(\gamma)} 
        - \left\langle \dfrac{d}{dt} w^- + L w^-, w^- \right\rangle_{L^2(\gamma)}, 
        \notag
    \end{align}
    then 
    \begin{align}
        \left\langle \dfrac{d}{dt} w^- + L w^-, w^- \right\rangle_{L^2(\gamma)}
        = \left\langle \dfrac{d}{dt} w^+ + L w^+, w^- \right\rangle_{L^2(\gamma)}
        = 0. 
        \notag
    \end{align}
    On the other hand, 
    \begin{align}
        \left\langle \dfrac{d}{dt} w^- + L w^-, w^- \right\rangle_{L^2(\gamma)} 
        &= \left\langle \dfrac{d}{dt} w^-, w^- \right\rangle_{L^2(\gamma)} 
        + \left\langle L w^-, w^- \right\rangle_{L^2(\gamma)}. 
        \notag
    \end{align}
    By the monotone propety of $L$, 
    the second term is positive, which indicates that 
    \begin{align}
        \dfrac{1}{2} \dfrac{d}{dt} \|w^-\|_{L^2(\gamma)}^2 
        = \left\langle \dfrac{d}{dt} w^-, w^- \right\rangle_{L^2(\gamma)} 
        \leq 0. 
        \notag
    \end{align}
    Consequently, 
    \begin{align}
        \|w^-\|_{L^2(\gamma)}^2  
        \leq \|w_0^-\|_{L^2(\gamma)}^2 
        = 0, 
        \notag
    \end{align}
    which indicates that $w(t)^- \equiv 0$, $\gamma$-a.e.

    \begin{acknowledgements}
        This work was partially funded 
        by JSPS KAKENHI (grant number 21K18582)
        and JST SPRING (grant number JPMJSP2114), 
        and supported in part 
        by the WISE Program for AI Electronics in Tohoku University.
        The author would like to thank Professor Norisuke Ioku
        for his helpful suggestions and comments. 
    \end{acknowledgements}

\end{document}